\documentclass[journal]{IEEEtran}
\usepackage{amsmath,amsfonts}
\usepackage{algorithmic}
\usepackage{algorithm}
\usepackage{array}
\usepackage[caption=false,font=normalsize,labelfont=sf,textfont=sf]{subfig}
\usepackage{textcomp}
\usepackage{stfloats}
\usepackage{url}
\usepackage{verbatim}
\usepackage{graphicx}
\usepackage{cite}
\usepackage{amsmath}
\usepackage{amssymb}
\usepackage{mathtools}
\usepackage{amsthm}

\usepackage{booktabs}  
\usepackage{multirow}   
\usepackage{graphicx}   
\usepackage[table]{xcolor} 

\usepackage[capitalize,noabbrev]{cleveref}

\def\eg{\emph{e.g.}} 

\def\ie{\emph{i.e.}} 

\def\etc{\emph{etc.}} 
\def\vs{\emph{vs. }}

\theoremstyle{plain}
\newtheorem{theorem}{Theorem}[section]
\newtheorem{proposition}[theorem]{Proposition}

\theoremstyle{definition}
\newtheorem{definition}[theorem]{Definition}
\newtheorem{assumption}[theorem]{Assumption}
\theoremstyle{remark}
\newtheorem{remark}[theorem]{Remark}

\begin{document}

\title{Detecting Deepfakes via Hamiltonian Dynamics}

\author{Harry Cheng,~\IEEEmembership{Member,~IEEE},
        Ming-Hui Liu,
        Tianyi Wang,~\IEEEmembership{Member,~IEEE},
        Weili Guan,~\IEEEmembership{Member,~IEEE},
        Liqiang Nie,~\IEEEmembership{Senior Member,~IEEE},
        Mohan Kankanhalli,~\IEEEmembership{Fellow, IEEE}

\thanks{
Harry Cheng, Tianyi Wang, and Mohan Kankanhalli are with the School of Computing, National University of Singapore, Singapore (e-mail: xaCheng1996@gmail.com; terry.ai.wang@gmail.com, dcsmsk@nus.edu.sg).
Ming-Hui Liu is with the School of Software, Shandong University, China (e-mail: liuminghui@mail.sdu.edu.cn). 
Weili Guan is with the School of Information Science and Technology, Harbin Institute of Technology (Shenzhen), China (e-mail: honeyguan@gmail.com).
Liqiang Nie is with the School of Computer Science and Technology, Harbin Institute of Technology (Shenzhen), Shenzhen, China (e-mail: nieliqiang@gmail.com).
}
}

\maketitle

\begin{abstract}
Driven by the rapid development of generative AI models, deepfake detectors are compelled to undergo periodic recalibration to capture newly developed synthetic artifacts.
To break this cycle, we propose a new perspective on deepfake detection: moving from static pattern recognition to dynamical stability analysis.
Specifically, our approach is motivated by physics-inspired priors: we hypothesize that natural images, as products of dissipative physical processes, tend to settle near stable, low-energy equilibria. In contrast, generative models optimize for statistical similarity to real images but do not explicitly enforce structural constraints such as geometric smoothness, leaving deepfakes more likely to occupy unstable, high-energy states.
To operationalize this, we introduce Hamiltonian Action Anomaly Detection (HAAD), comprising three contributions:
\textbf{i)} We model the image latent manifold as a potential energy surface. Under this hypothesis, real images are expected to produce basin-like low-energy responses, whereas fake images are more likely to induce high-potential, high-gradient responses.
\textbf{ii)} We employ Hamiltonian-inspired dynamics as a stability probe. By releasing latent states from rest, samples near stable regions remain bounded, while high-gradient samples produce larger trajectory responses.
\textbf{iii)} We quantify these dynamic behaviors through two trajectory statistics, \ie, Hamiltonian action and energy dissipation.
Extensive experiments show that HAAD outperforms evaluated state-of-the-art baselines on challenging cross-dataset transfer benchmarks, supporting a physics-inspired stability prior for digital forensics.
\end{abstract}

\begin{IEEEkeywords}
Deepfake Detection, AIGI, Hamiltonian Dynamics, Physics-Inspired Inductive Bias, Latent Stability Analysis.
\end{IEEEkeywords}

\section{Introduction}
\label{sec:intro}
The rapid advancement of generative models has blurred the boundary between reality and fabrication~\cite{GAN, SD3}. While fueling creativity, these techniques have lowered the barrier for malicious exploitation, such as `deepfakes' that threaten information ecosystems~\cite{Face2Face}. As a result, developing detection frameworks has been an urgent societal need.

The research community has proposed a number of detectors~\cite{cheng2024leavedeepfakedatatraining, Hong_Deepfake_CVPR_2024, cheng2025fair, ma2025specificity, xia2024mmnet}. 
The prevailing paradigm focuses on identifying \emph{specific static artifacts}, such as blending inconsistencies~\cite{SBI_ShioharaY22} or upsampling fingerprints~\cite{Tan_CVPR24}. 
However, these approaches suffer from a fundamental limitation: they tend to overfit to these artifacts~\cite{CVPR2025_1}. As generative architectures evolve, characteristic artifacts change drastically~\cite{DiFF}, rendering existing detectors ineffective and trapping the field in a `cat-and-mouse' cycle.

\begin{figure}[t]
    \centering
    \includegraphics[width=0.47\textwidth]{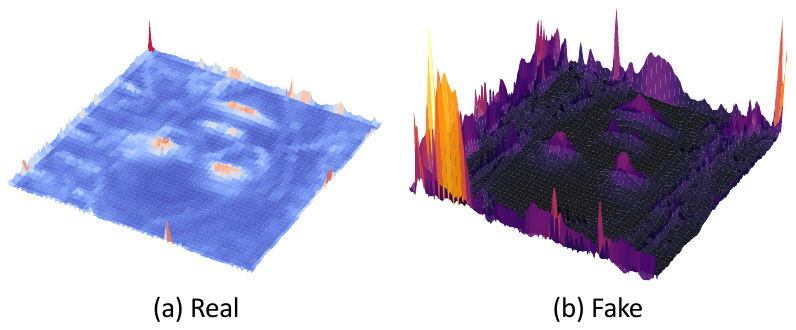}
    \caption{Visualization of microscopic structural irregularities, measured by Laplacian magnitude and averaged over 1,000 images from the Celeb-DF++ dataset~\cite{Celeb-DF++}. (a) Real images yield a smoother averaged manifold surface. (b) Forged images exhibit more jagged high-frequency spikes.}
    \label{fig:1_intro}
\end{figure}

To break this cycle, we seek a detection cue from the \emph{physics of image formation} rather than the specific artifacts of any particular generator, which remains valid as generative architectures evolve.
Specifically, real images are formed by physical imaging processes, \ie, they are captured by camera sensors through light reflected from surfaces, which inherently enforce local regularities such as geometric smoothness between neighboring pixels and photometric consistency of shading across surfaces. In contrast, fake images are produced by generative models (\eg, GANs and diffusion models) trained with distribution-level objectives that contain no explicit penalty for violating such local structure.
As a result, while generators can match natural images at the global statistical level, they often leave \emph{microscopic structural irregularities}~\cite{physics_as_prior} that physically formed real images do not exhibit.
As shown in \Cref{fig:1_intro}, real images form a smooth manifold surface, whereas deepfakes are punctuated by high-frequency jagged spikes.

To turn this observation into a quantitative signal, we draw an analogy with \emph{physical systems}. Specifically, in such systems, configurations that are smooth and relaxed (\eg, a flat elastic membrane) correspond to low-energy states, whereas locally stretched or jagged configurations store potential energy as elastic tension. That is, they reside in high-energy states.
By this analogy, we interpret the microscopic structure of the images in \Cref{fig:1_intro} as a form of \emph{energy}, where images exhibiting greater structural inconsistency (\ie, fake images) should correspond to higher energy values, while images satisfying the geometric and photometric regularities receive low energy.
It is worth noting that this energy is defined by physical regularities rather than by the specific artifacts of any particular generator. Therefore, the resulting real-vs-fake energy gap is less tied to any single generator and is expected to generalize across evaluated non-adaptive transfer settings.

However, directly comparing this static energy between real and fake images can be unreliable, as the gap may be subtle. To amplify this gap, we shift our focus from static measurement to \emph{dynamical evolution}. Specifically, we hypothesize that real images tend to reside in lower-energy states, while deepfakes are more likely to occupy higher-energy states. If we further conceptualize each image as a physical particle on an energy landscape, real particles will more often sit near stable basins, while fake particles will more often appear on steep slopes (as shown in \Cref{fig:energy}a). We term this assumption the \textbf{Manifold Stability Hypothesis} (MSH)\footnote{Please see \Cref{sec:Theory} for details.}.
To operationalize it, we model the particle's evolution within a \textbf{Hamiltonian mechanics} framework~\cite{cranmer2020lagrangian}. In this framework, the energy landscape exerts forces on the particle through its gradient, and we simulate the resulting trajectory. As illustrated in \Cref{fig:energy}b, particles near stable basins remain approximately stationary, whereas particles on steep slopes accelerate along the gradient. By tracking these trajectories, we convert subtle static energy differences into macroscopic motion patterns.

\begin{figure}[t]
    \centering
    \includegraphics[width=0.45\textwidth]{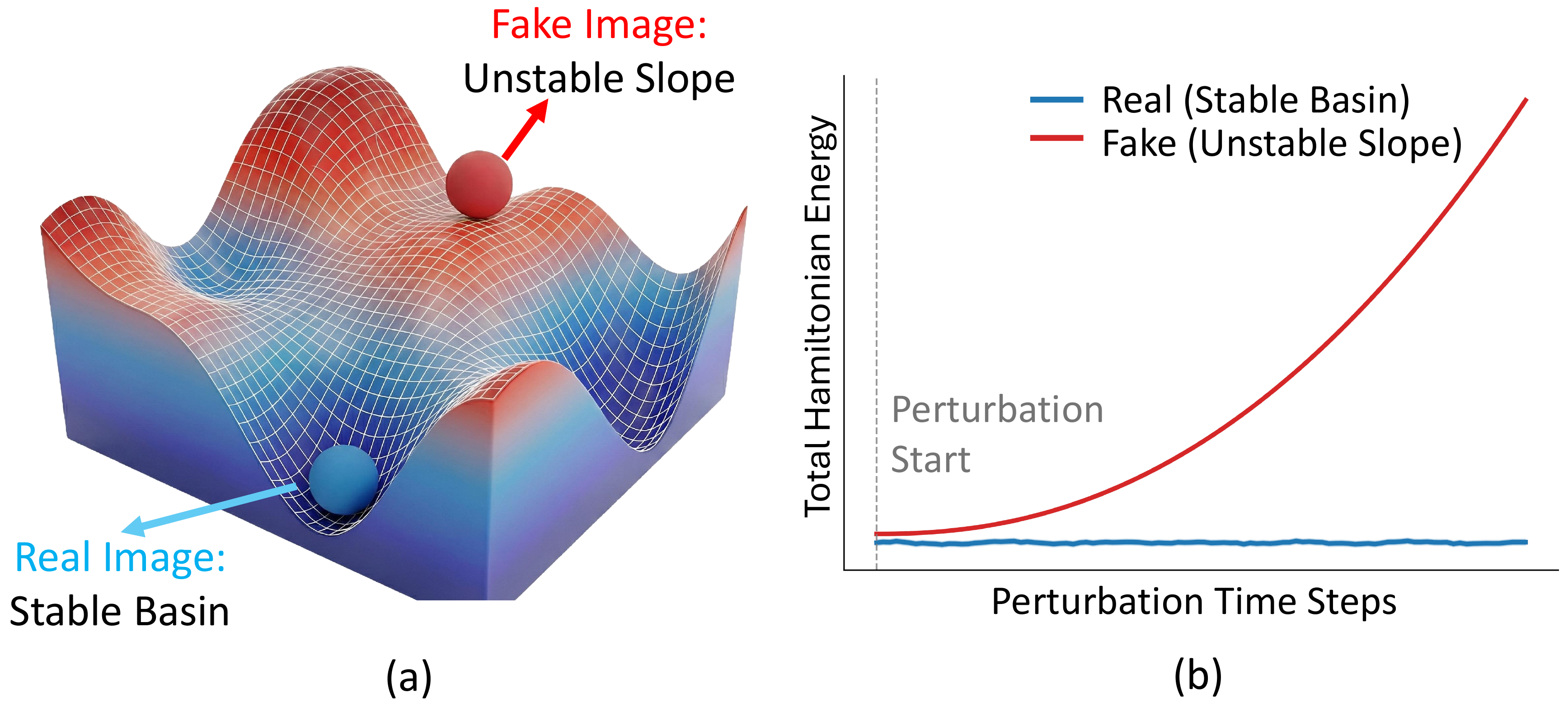}
    \caption{(a) Manifold Stability Hypothesis. Real images are expected to concentrate near stable basins (blue), while deepfakes are more likely to induce high-gradient slopes (red). (b) Hamiltonian-inspired trajectories. Basin-like samples remain bounded under perturbation, while high-gradient samples exhibit larger energy divergence (high action).}
    \label{fig:energy}
\end{figure}

To turn this hypothesis into a trainable stability probe, we propose Hamiltonian Action Anomaly Detection (HAAD). Specifically, HAAD consists of three components:
\textbf{i) Realizing the energy landscape} (\emph{learnable potential $V$}).
Under the MSH, the energy function should statistically separate basin-like real responses from high-gradient fake responses. We instantiate this landscape as a learnable potential $V$ built from two physics-motivated priors: geometric smoothness (via graph Laplacians, penalizing abrupt feature transitions between neighboring patches) and photometric consistency (via Lambertian shading constraints, penalizing implausible illumination patterns). Images that violate these priors receive high potential energy, making them more likely to generate slope-like trajectory responses.
\textbf{ii) Carrying out the dynamics} (\emph{symplectic evolution}).
We stipulate that each particle evolves under the force induced by $V$. To realize this, we employ a short-horizon symplectic integrator, which is designed to reduce solver-induced drift while probing the predicted behavior: particles near minima stay approximately put, while particles on slopes accelerate along the gradient. This can amplify small static energy gaps into larger trajectory discrepancies.
\textbf{iii) Reading out the motion patterns} (\emph{Action $S$ and Dissipation $D$}).
We distinguish real from fake by qualitative trajectory behavior via two statistics: \emph{Action} $S$, the accumulated energy cost measuring how forcefully the particle is driven, and \emph{Dissipation} $D$, the energy drift measuring deviation from stable evolution. Together, they serve as discriminative features for classification.
Through these three closely associated components, HAAD anchors deepfake detection on a stability-based inductive bias rather than transient artifacts.

We evaluate HAAD on several deepfake benchmarks~\cite{Celeb-DF, dfdc, dfd2019, DeepfakeBench}, where the cross-dataset and cross-generator experiments show strong transfer performance compared to several state-of-the-art (SoTA) baselines. Our contributions are three-fold:

\begin{itemize}
    \item We introduce a physics-inspired stability prior for deepfake detection. We posit that real images tend to reside near low-energy basins while deepfakes are more likely to occupy high-energy slopes of a physics-motivated energy landscape. This reframes detection from static pattern matching to \emph{dynamical stability analysis}.
    \item We propose \emph{Hamiltonian Action Anomaly Detection}, which learns a potential $V$ from physical priors, evolves image representations with a short-horizon symplectic integrator, and extracts two trajectory statistics as discriminative features for classification.
    \item Extensive experiments support the proposed MSH as an effective stability prior. HAAD achieves strong performance among SoTA baselines under cross-dataset and cross-generator experiments.
\end{itemize}

\section{Related Work}

\subsection{Deepfake and Synthetic Image Detection}
Our work addresses both deepfake manipulations (\eg, face swapping)~\cite{Face2Face, DiFF, diffswap} and general image synthesis (\eg, natural scenes)~\cite{GenImage, DIRE, SD3}. Across these domains, the research community is focused on generalization against unseen forgery methods.
Existing approaches typically fall into two paradigms: 
i) \textbf{Artifact Mining} focuses on identifying explicit statistical anomalies left by generative processes~\cite{xia2024advancing, liu2025learning}. Researchers have exploited frequency discrepancies~\cite{Exploring_Frequency_Adversarial, SSTNET}, blending boundary traces~\cite{SBI_ShioharaY22, x-ray}, and reconstruction errors~\cite{Face_Reconstruction, DIRE}.
ii) \textbf{Foundation Model Adaptation} employs the rich representations of pre-trained vision-language models~\cite{wu2023generalizable, liu2024forgery, ojha2023towards}.
For instance, VLFFD~\cite{VLFFD} utilizes CLIP~\cite{CLIP} to exploit semantic priors for detection.
Effort~\cite{Effort} integrates Singular Value Decomposition (SVD) into
pre-trained vision encoder to enhance sensitivity to structural artifacts.
Although these methods achieve impressive performance, they treat the input image as a \textit{static map}, classifying it based on learned features. This creates a `lag': as generative models evolve to suppress these specific artifacts, detectors become obsolete.
Some recent work goes beyond static appearance. For instance, FAMM~\cite{liao2023famm} detects compressed deepfake videos by analyzing facial muscle motion patterns rather than individual frames.
However, these motion-based approaches rely on temporal signals across video frames, which are unavailable for single-image deepfakes, and do not extend to general AIGI detection.
Our work occupies a complementary niche: we introduce a \emph{latent stability probe} based on Hamiltonian dynamics, which detects structural instability from a single image's feature representation without requiring temporal sequences or per-frame motion estimation.

\subsection{Physics-Informed Vision and Dynamics}
Unlike purely data-driven approaches, physics-based vision explicitly models the structural and photometric constraints in natural image formation~\cite{zhang2021physg}.
For instance, inverse rendering approaches demonstrate that real images adhere to photometric laws~\cite{li2020inverse}.
Parallel to this, physics-inspired neural networks incorporate physical laws as inductive biases for representation learning~\cite{physics_as_prior}, and Hamiltonian mechanics has been integrated into machine learning to model continuous dynamical systems~\cite{liu2020energy, cranmer2020lagrangian}.
Hamiltonian Neural Networks (HNNs)~\cite{hamiltonian_2019} enforce energy conservation to learn physical laws from observed trajectory data~\cite{duvenaud2020your}.
However, these methods are predominantly confined to \emph{simulation} or \emph{future forecasting} in closed environments.
To the best of our knowledge, our HAAD is the first work to cast deepfake detection as \emph{dynamical stability analysis} under a Hamiltonian framework.
Instead of learning \emph{static} physical consistency, our approach constructs a learnable potential energy surface based on physical priors and treats the image representation as a particle system.
By subjecting the image features to a symplectic evolution, we transform the detection problem into a \emph{dynamical stability test}, revealing the dynamical instability of synthesized images predicted by the Manifold Stability Hypothesis.
\section{Methodology} \label{sec:mot}

Before presenting the HAAD architecture, we first formalize the theoretical framework motivating its design. Specifically, we cast deepfake detection as \emph{dynamical stability analysis} on a learned potential energy landscape. Each latent feature is modeled as a dynamical state $(\mathbf{q}, \mathbf{p})$ that evolves under Hamiltonian mechanics, where $\mathbf{q}$ encodes image content and $\mathbf{p}$ is the canonical momentum driving the probe. Under this formulation, real images are hypothesized to reside near stable equilibria, since they are shaped by natural physical processes. By contrast, deepfakes, trained with objectives that do not explicitly enforce local structural regularities, tend to occupy unstable states on the landscape. We formalize this intuition as the \emph{Manifold Stability Hypothesis} (\Cref{ass:stability}) and derive detection cues from its predicted dynamical behavior.

\subsection{Theoretical Formulation}
\label{sec:Theory}

Directly verifying whether an image satisfies every physical regularity of natural scenes (surface smoothness, illumination coherence, geometric consistency, \etc) is infeasible: these regularities form an open-ended set, and any explicit enumeration risks missing cases by construction.
Therefore, HAAD does not aim for such completeness. Instead, we construct a \emph{potential energy landscape} based on a set of intuitive physical priors (detailed in \Cref{sec:gnn_potential}), where a point's elevation measures the total extent to which it violates those priors.

\vspace{1mm}
\noindent \textbf{Geometric Definitions of the Data Manifold.}
Let $E: \mathcal{X} \to \mathcal{Q} \subseteq \mathbb{R}^d$ be an encoder mapping the image space $\mathcal{X}$ to a latent manifold $\mathcal{Q}$. We denote $\mathbf{q} = E(\mathbf{x}) \in \mathcal{Q}$ as a latent position, where $\mathbf{x}$ is an input image. We interpret this manifold as a physical system governed by a scalar potential field.

\begin{definition}[\textbf{Latent Potential Energy}]
We define a potential function $V\!:\!\mathcal{Q}\!\to\!\mathbb{R}$ that measures how much a state $\mathbf{q}\!\in\!\mathcal{Q}$ deviates from the physics-motivated regularities of natural images.
Conceptually, this aligns with Energy-Based Models (EBMs), where the probability density of real data follows a Gibbs distribution $p(\mathbf{q})\!\propto\!\exp(-V(\mathbf{q}))$.
Within this framework, the \textbf{negative} gradient field $-\nabla V(\mathbf{q})$ represents the \textit{restorative force} driving the system toward high-density (low-energy) equilibrium regions.
\end{definition}

\noindent \textbf{Physical Motivation.}
Dissipative mechanics~\cite{zhong2019symplectic} and Lyapunov stability theory~\cite{hopfield1982neural} motivate the intuition that physical systems tend to lose energy until they settle near stable, low-energy equilibria. The Principle of Least Action provides a complementary perspective: through its static corollary (the Principle of Minimum Potential Energy), it characterises equilibrium states as potential minima (detailed in Appendix~\ref{app:theory}). Together, these principles motivate the hypothesis that real images, as products of natural dissipative processes, tend to reside near the `relaxed' basins of the manifold. In contrast, deepfakes are produced by generative models that optimize for statistical similarity to real images but do not explicitly enforce local structural constraints. As a result, synthesized samples may retain microscopic structural inconsistencies, which behave like `compressed springs' or particles trapped on steep slopes, `suspended' in high-potential states.

Based on this physical intuition, we introduce the following hypothesis on the topological states of real \vs fake data:

\begin{assumption}[\textbf{Manifold Stability Hypothesis}]
\label{ass:stability}
We hypothesize that natural images $\mathbf{x}_{real}$ are typically located near local minima (stable equilibria) of the learned potential surface $V$. Concretely, for a real latent representation $\mathbf{q}_{real}$, we model $\mathbf{q}_{real}$ as admitting a local neighborhood $\mathcal{N}$ such that:
\begin{equation}
    V(\mathbf{q}_{real}) \leq V(\mathbf{q}), \quad \forall \mathbf{q} \in \mathcal{N}.
\end{equation}

The gradient at equilibrium approximately vanishes: $\|\nabla V(\mathbf{q}_{real})\| \approx 0$.
In contrast, we expect deepfake samples $\mathbf{x}_{fake}$ to more frequently occupy high-gradient regions (steep slopes) of $V$, where $\|\nabla V(\mathbf{q}_{fake})\| \gg 0$\footnote{We provide a theoretical motivation linking dissipative mechanics and the Principle of Least Action to this stability hypothesis in Appendix~\ref{app:theory}. }.
\end{assumption}

\noindent \textbf{Scope of the Hypothesis.}
Assumption~\ref{ass:stability} is a \emph{distributional} modeling hypothesis about the populations of real and generated images under $V$, not a per-sample statement about any individual image. The asymmetry we exploit is a statistical tendency: current generative models are trained with distribution-level objectives that do not explicitly enforce local physical regularities, so their outputs are more likely to violate regularity and accumulate elevation under $V$. We thus treat $V$ as a \emph{probe} for inconsistency with these regularities. 

\vspace{1mm}

\noindent \textbf{Hamiltonian Dynamics as a Stability Probe.}
To distinguish the states defined in Assumption~\ref{ass:stability}, we construct a dynamical system to probe the manifold's geometry. We treat the latent feature $\mathbf{q}$ as a particle system.
To capture the varying sensitivity of different semantic features, we introduce a learnable state-conditioned diagonal preconditioner $\mathbf{M}(\mathbf{q})$.
In addition, we introduce a canonical momentum variable $\mathbf{p}$, initialized at zero ($\mathbf{p}_0 = \mathbf{0}$).

\begin{definition}[\textbf{Hamiltonian System}]
\label{def:hamilton_system}
The total energy of the system, denoted as the Hamiltonian $\mathcal{H}$, is the sum of kinetic energy $T(\mathbf{p})$ and potential energy $V(\mathbf{q})$:
\begin{equation}
    \mathcal{H}(\mathbf{q}, \mathbf{p}) = T(\mathbf{p}) + V(\mathbf{q}) = \frac{1}{2}\|\mathbf{p}\|^2 + V(\mathbf{q}).
\end{equation}

The temporal evolution of the feature state is governed by Hamilton's equations of motion:
\begin{equation}
    \label{eq:hamilton}
    \frac{d\mathbf{q}}{dt} = \frac{\partial \mathcal{H}}{\partial \mathbf{p}} = \mathbf{p}, \quad 
    \frac{d\mathbf{p}}{dt} = -\frac{\partial \mathcal{H}}{\partial \mathbf{q}} = -\nabla_{\mathbf{q}} V(\mathbf{q}).
\end{equation}
\end{definition}

\begin{figure*}[t]
    \centering
    \includegraphics[width=0.95\textwidth]{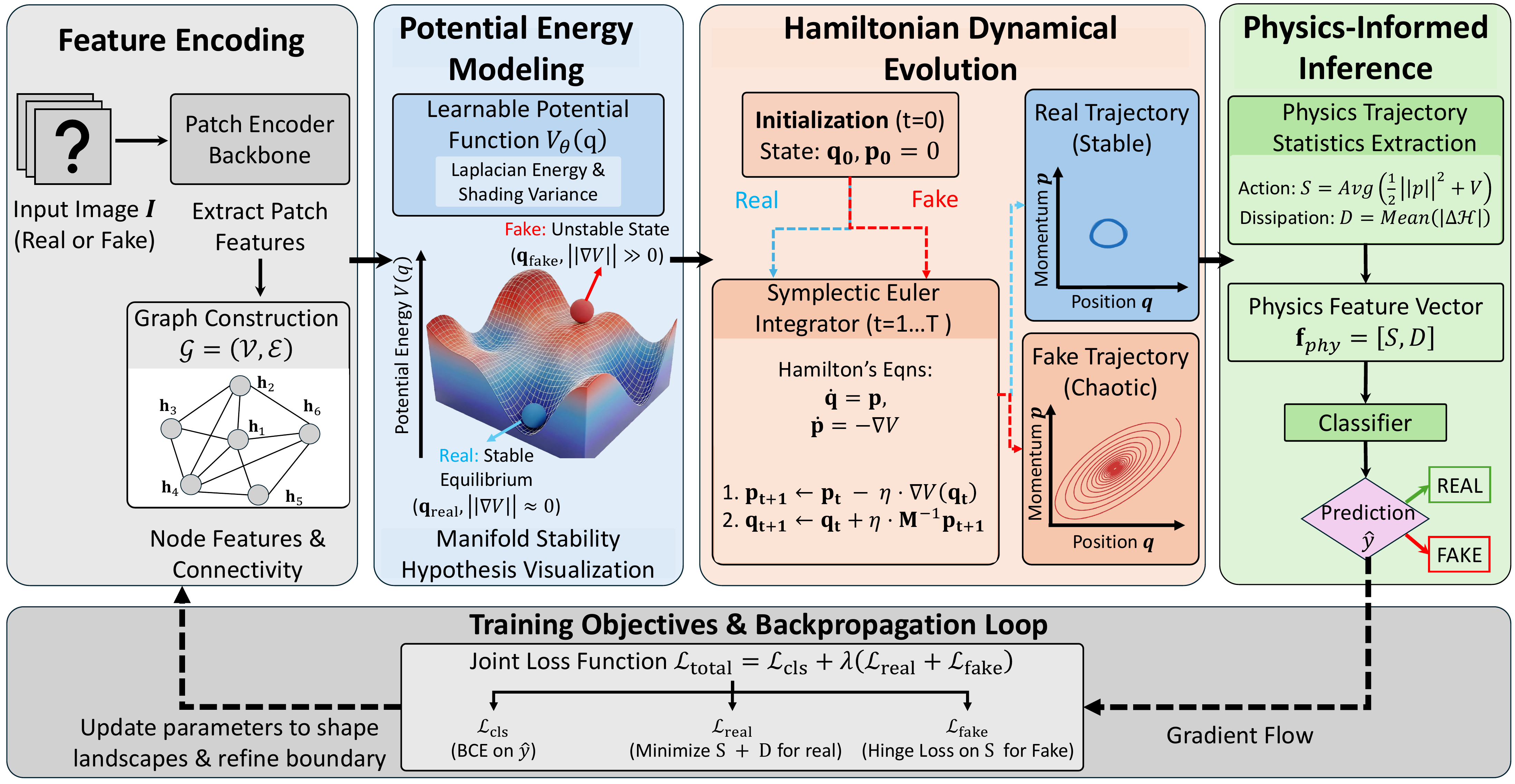}
    \caption{Overview of the proposed HAAD. We cast deepfake detection as \emph{dynamical stability analysis under a Hamiltonian framework}: \textbf{i)}~patch features are organized into a spatial graph, then fed into \textbf{ii)}~a learnable potential $V$ whose landscape encodes physical priors under the Manifold Stability Hypothesis, real samples (blue) are expected to fall near stable equilibria of $V$ while fakes (red) tend to occupy high-gradient unstable states. \textbf{iii)} A short-horizon symplectic integrator amplifies these differences into \textbf{iv)}~trajectory statistics $S$ and $D$ for classification, jointly trained with $\mathcal{L}_{cls}$ and $\mathcal{L}_{phy}$ (bottom loop).}
    \vspace{-1em}
    \label{fig:framework}
\end{figure*}

\noindent \textbf{Physical Interpretation.}
\Cref{eq:hamilton} implies that the \textbf{potential gradient} $-\nabla V$, which points toward states more consistent with the physics-motivated regularities, acts as a physical force. Since we initialize at rest ($\mathbf{p}_0=\mathbf{0}$), any movement is purely driven by this internal tension.
Under Assumption~\ref{ass:stability}, real images (where $\nabla V \approx 0$) are expected to remain approximately stationary, while deepfakes (where $\|\nabla V\| \gg 0$) are expected to experience immediate acceleration, converting potential energy into kinetic momentum.

\begin{proposition}[\textbf{Divergence of Hamiltonian Action}]
\label{prop:divergence}
Consider the evolution of a particle over a small time interval $t \in [0, \tau]$ starting from rest ($\mathbf{p}_0 = \mathbf{0}$). Under Assumption~\ref{ass:stability}, the kinetic energy $T$ for a deepfake sample exhibits quadratic growth over time, whereas for a real sample, it remains negligible. As a result, the accumulated Action $S$ satisfies:
\begin{equation}
    S(\mathbf{q}_{fake}) \gg S(\mathbf{q}_{real}).
\end{equation}
\end{proposition}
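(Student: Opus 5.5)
We plan a short-time Taylor expansion of the Hamiltonian flow in \Cref{eq:hamilton}, started from $(\mathbf{q}_0,\mathbf{p}_0)=(\mathbf{q},\mathbf{0})$. We take the accumulated Action to be the time integral of kinetic energy along the trajectory, $S=\int_0^\tau T(\mathbf{p}(t))\,dt$. This is the natural proxy, since $\int_0^\tau (T-V)\,dt$ and $\int_0^\tau \mathbf{p}\cdot\dot{\mathbf{q}}\,dt = 2\int_0^\tau T\,dt$ differ from it only by a constant factor or by a term $-\tau V(\mathbf{q})+O(\tau^3)$. If the paper's later definition of $S$ differs, the same expansion applies term by term. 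We assume $V\in C^2$ with Hessian bounded by $L$ on a neighborhood containing the trajectory for $t\le\tau$. This assumption is mild for the learned potential and is needed for any quantitative statement.

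\textbf{Step 1: momentum and kinetic energy.} Integrating $\dot{\mathbf{p}}=-\nabla V(\mathbf{q}(t))$ from $\mathbf{p}_0=\mathbf{0}$ gives $\mathbf{p}(t)=-t\,\nabla V(\mathbf{q}_0)+\mathbf{r}(t)$. Because $\dot{\mathbf{q}}=\mathbf{p}$, we have $\|\mathbf{q}(t)-\mathbf{q}_0\|\le \int_0^t\|\mathbf{p}\|$. A Gr\"onwall argument using the Hessian bound then yields $\|\mathbf{r}(t)\|\le C L t^3\|\nabla V(\mathbf{q}_0)\|$ for $t\le\tau$ small, say $L\tau^2<1$. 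Consequently
\begin{equation*}
T(t)=\tfrac12 t^2\|\nabla V(\mathbf{q}_0)\|^2\bigl(1+O(L t^2)\bigr),
\end{equation*}
which is the claimed quadratic growth. Integrating over $[0,\tau]$ gives
\begin{equation*}
S(\mathbf{q}_0)=\tfrac{\tau^3}{6}\|\nabla V(\mathbf{q}_0)\|^2\bigl(1+O(L\tau^2)\bigr).
\end{equation*}

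\textbf{Step 2: apply Assumption~\ref{ass:stability}.} For $\mathbf{q}_{real}$ the gradient nearly vanishes, $\|\nabla V(\mathbf{q}_{real})\|\le\varepsilon$. In addition, local minimality makes the Hessian positive semidefinite there, so the trajectory stays in the basin and the same bound gives $S(\mathbf{q}_{real})\lesssim \tau^3\varepsilon^2$. For $\mathbf{q}_{fake}$ we have $\|\nabla V(\mathbf{q}_{fake})\|\ge G$ with $G\gg\varepsilon$, hence $S(\mathbf{q}_{fake})\gtrsim \tau^3 G^2$. The ratio of the two actions is therefore of order $(G/\varepsilon)^2\gg1$. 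The power two shows that the action amplifies the static gradient gap quadratically, which also supports the later "amplification" claim.

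\textbf{Main obstacle.} The difficulty is making "$\approx0$" and "$\gg0$" precise enough that the $O(L\tau^2)$ remainders cannot swamp the leading terms. We need the real-sample remainder, driven by the small $\varepsilon$, to stay of order $\varepsilon^2\tau^3$. We also need the fake leading term not to be cancelled, which requires $L\tau^2\ll1$ uniformly over samples. We would state the result under these explicit quantitative conditions, namely a uniform Hessian bound, the gradient thresholds $\varepsilon$ and $G$, and a small horizon. Since Assumption~\ref{ass:stability} is distributional, we would also state the conclusion as holding in expectation, or for typical samples, rather than for every pair of samples. If the preconditioner $\mathbf{M}(\mathbf{q})$ is used, we would replace $\|\nabla V\|^2$ by $\nabla V^\top\mathbf{M}^{-1}\nabla V$ and require uniform spectral bounds on $\mathbf{M}$.
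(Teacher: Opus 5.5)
Your Steps 1--2 are a correct and tidy short-time analysis, but they apply to a different quantity from the one in the statement. The bridge you offer (``if the paper's definition of $S$ differs, the same expansion applies term by term'') fails.

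In the paper, $S$ is the time average of the \emph{total} energy, $S=\frac{1}{\tau}\int_0^\tau \bigl(T(t)+V(\mathbf{q}(t))\bigr)\,dt$ (discretely $\frac{1}{TN}\sum_t\mathcal{H}_t$), not $\int_0^\tau T\,dt$. Along the canonical flow of Definition~\ref{def:hamilton_system} started at rest, $\mathcal{H}$ is conserved. Hence $T(t)+V(\mathbf{q}(t))\equiv V(\mathbf{q}_0)$ and $S=V(\mathbf{q}_0)$ identically. Term by term, your own Step 1 gives $\frac1\tau\int_0^\tau V(\mathbf{q}(t))\,dt=V(\mathbf{q}_0)-\frac{\tau^2}{6}\|\nabla V(\mathbf{q}_0)\|^2\bigl(1+O(L\tau^2)\bigr)$, which cancels the kinetic contribution exactly. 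The $\tau^3\|\nabla V\|^2$ amplification disappears, and the comparison collapses to $V(\mathbf{q}_{fake})$ versus $V(\mathbf{q}_{real})$.

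Assumption~\ref{ass:stability} does not order these levels. It constrains gradient norms, and it makes $\mathbf{q}_{real}$ a minimizer only on its own neighbourhood $\mathcal{N}$, which need not contain $\mathbf{q}_{fake}$. The same issue undermines your ``proxy'' remark: exactly, $\int_0^\tau(T-V)\,dt=2\int_0^\tau T\,dt-\tau V(\mathbf{q}_0)$. So the ``$O(\tau^3)$'' correction is as large as the signal itself, and the sample-dependent $-\tau V(\mathbf{q}_0)$ term dominates both.

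The missing ingredient is a hypothesis on potential \emph{levels}, which the paper supplies by fiat. It restricts to ``the local comparison in which $\Delta V=V(\mathbf{q}_{fake})-V(\mathbf{q}_{real})$ is positive'' and then writes $S_{fake}-S_{real}\approx\Delta V+\frac{\tau^2}{6}C^2$. The paper's extra $\frac{\tau^2}{6}C^2$ comes from holding $V(\mathbf{q}(t))\approx V(\mathbf{q}_0)$ while letting $T$ grow. Your expansion shows this is inconsistent in the canonical system: what survives is $S_{fake}-S_{real}=\Delta V$. So positivity, and any ``$\gg$'', rests entirely on that added hypothesis, and you must state it explicitly to cover the proposition as intended.

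Your kinetic-growth argument then proves a separate claim, about $\int_0^\tau T\,dt$. Alternatively, it could be adapted to the preconditioned rollout, where the canonical $\mathcal{H}$ is not conserved because $\frac{d}{dt}\mathcal{H}=\nabla V^\top(\mathbf{M}^{-1}-\mathbf{I})\mathbf{p}$, whose sign depends on $\mathbf{M}$.

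A smaller point: with $T=\frac12\|\mathbf{p}\|^2$, the momentum equation $\dot{\mathbf{p}}=-\nabla V$ does not involve $\mathbf{M}$. So under preconditioning the leading term of your $S$ stays $\frac{\tau^3}{6}\|\nabla V\|^2$, and $\mathbf{M}$ enters only the remainder. Replacing $\|\nabla V\|^2$ by $\nabla V^\top\mathbf{M}^{-1}\nabla V$ is therefore not correct.
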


\begin{proof}[Proof Sketch]
We analyze the short-term dynamics via a first-order Taylor expansion of the potential around $\mathbf{q}_0$: $V(\mathbf{q}) \approx V(\mathbf{q}_0) + \nabla V(\mathbf{q}_0)^\top(\mathbf{q}-\mathbf{q}_0)$. 
Under Assumption~\ref{ass:stability}, a real image is expected to initialize near a local minimum where $\|\nabla V(\mathbf{q}_0)\| \approx 0$. Hamilton's equations imply momentum rate $\dot{\mathbf{p}} = -\nabla V \approx \mathbf{0}$. Therefore, the particle remains approximately stationary ($\mathbf{p}(t) \approx \mathbf{0}$), yielding minimal Action.

In contrast, a deepfake sample is expected to more often initialize on a slope with significant gradient $\|\nabla V(\mathbf{q}_0)\| = C > 0$. Within the small interval $\tau$, we approximate the gradient force as constant. Integrating $\dot{\mathbf{p}} \approx -C\mathbf{u}$ yields the momentum $\|\mathbf{p}(t)\| \approx C t$. Under the canonical kinetic energy $T(\mathbf{p}) = \tfrac{1}{2}\|\mathbf{p}\|^2$ of Definition~2, this grows quadratically:
\begin{equation}
    T(t) = \tfrac{1}{2}\|\mathbf{p}(t)\|^2 \approx \tfrac{1}{2} C^2 t^2 = \mathcal{O}(t^2).
\end{equation}

Since $S$ accumulates the total energy ($\mathcal{H} = T+V$) over time, this quadratic surge in $T$ leads to a larger Action score for fake samples than for real samples under Assumption~\ref{ass:stability}\footnote{We provide the detailed derivation in Appendix~\ref{app:proofs}.}. \end{proof}

This proposition implies that by simulating Hamiltonian evolution, we can \textbf{amplify} the subtle static differences (gradients) into macroscopic dynamic discrepancies (kinetic energy/action), which we use directly as detection cues.

\begin{remark}[\textbf{Testable Empirical Prediction}]
\label{rem:testable}
Proposition~\ref{prop:divergence} turns the Manifold Stability Hypothesis from a modeling choice into an operational prediction: if the hypothesis describes the real and fake populations well, the trajectory statistic $S$ should be consistently smaller for real samples than for fake samples, not only on the training distribution but also on unseen datasets, unseen manipulations, and unseen generators. This makes the hypothesis indirectly \emph{falsifiable}: if transfer to unseen distributions failed, the distributional form of the hypothesis would lose empirical support. We provide the cross-dataset, cross-manipulation, and cross-generator experiments in \Cref{sec_exp} as the empirical test of this prediction.
\end{remark}

\subsection{The HAAD Framework}
\label{sec:framework}

As illustrated in Figure~\ref{fig:framework}, our HAAD framework contains three key components to translate the theoretical formulation into a differentiable architecture: i) a learnable potential function $V$, ii) a numerical solver to simulate the continuous dynamics, and iii) physics-informed metrics for detection. 

\subsubsection{Potential Energy Modeling}
\label{sec:gnn_potential}
In statistical physics, potential energy quantifies the tension within a system derived from particle interactions. Extending this to the visual domain, we model an image as a system of \textit{interacting patches}.
To instantiate the $V$ introduced in \Cref{sec:Theory}, we construct a learnable potential energy surface $V(\mathbf{q}; \theta)$, decomposed into two physics-inspired components: Geometric Potential (penalizing structural discontinuities) and Photometric Potential (penalizing illumination inconsistencies).

\vspace{1mm}
\noindent \textbf{Physical State Projection.}
Specifically, given the patch features $\mathbf{x} \in \mathbb{R}^{N \times D_{in}}$ from the backbone, we project them into a unified physical state space. In particular, we employ lightweight heads to estimate the latent position state $\mathbf{q} \in \mathbb{R}^{N \times D}$, alongside three intrinsic physical properties: surface normal $\mathbf{n} \in \mathbb{R}^{N \times 3}$, albedo $\boldsymbol{\rho} \in \mathbb{R}^{N \times 1}$, and global lighting $\mathbf{l} \in \mathbb{R}^{3}$.
The two groups serve distinct roles in the potential decomposition: $\mathbf{q}$ provides the spatial patch representation consumed by the \emph{geometric} component (Dirichlet energy over the patch graph), while $(\mathbf{n}, \boldsymbol{\rho}, \mathbf{l})$ provide the illumination decomposition bases consumed by the \emph{photometric} component. All four projection heads are lightweight linear layers trained \emph{end-to-end} through the total loss $\mathcal{L}_{total}$ (detailed in \Cref{sec:training}), with no separate auxiliary supervision. 
As a result, $(\mathbf{n}, \boldsymbol{\rho}, \mathbf{l})$ should be understood as \emph{learned decomposition bases} that make the photometric component of $V$ sensitive to illumination inconsistencies across patches, rather than as calibrated surface properties. This end-to-end design allows the photometric bases to optimize jointly with the detection objective, without requiring separate physical supervision.

\noindent \textbf{Geometric Potential ($V_{geo}$).}
To capture structural smoothness, we define a spatial graph $\mathcal{G}$ over the patch grid and compute the graph Laplacian $\mathbf{L} \in \mathbb{R}^{N \times N}$. The geometric potential is formulated as the \textit{Dirichlet energy} of the signal:
\begin{equation}
\label{eq:geo}
\begin{aligned}
V_{geo}(\mathbf{q}) & = \frac{1}{N} \text{tr}(\mathbf{q}^\top \mathbf{L} \mathbf{q}) \\
    & = \frac{1}{N} \sum_{(i,j) \in \mathcal{E}} w_{ij} \|\mathbf{q}_i - \mathbf{q}_j\|^2.
\end{aligned}
\end{equation}

Physically, this term mimics a `spring force' between neighboring patches. Under this model, real images tend to exhibit smooth feature transitions (low Dirichlet energy), whereas deepfakes often contain high-frequency splicing artifacts or boundary discontinuities, triggering a high energy penalty.

\noindent \textbf{Photometric Potential ($V_{photo}$).}
To introduce a heuristic photometric coherence prior, we use a Lambertian-inspired shading constraint, where the potential is defined as the variance of the reconstructed shading field:
\begin{equation}
\label{eq:photo}
V_{photo}(\mathbf{n}, \boldsymbol{\rho}, \mathbf{l}) = \text{Var}\left( \boldsymbol{\rho} \odot \text{ReLU}(\mathbf{n} \cdot \mathbf{l}) \right).
\end{equation}

This term acts as a heuristic global coherence regularizer by penalizing high-frequency shading jitter under a shared lighting decomposition, thereby encouraging lighting patterns that are more self-consistent across patches.

\noindent \textbf{Total Potential Surface.}
The final potential energy is a nonnegative weighted combination of these constraints:
\begin{equation}
V(\mathbf{q}) = \lambda_{geo} V_{geo}(\mathbf{q}) + \lambda_{photo} V_{photo}(\mathbf{n}, \boldsymbol{\rho}, \mathbf{l}),
\end{equation}
where $\lambda_{geo}, \lambda_{photo} \ge 0$ are scalar coefficients balancing the two regularizers. Since both $V_{geo}$ and $V_{photo}$ are nonnegative by construction, we interpret $V$ up to an additive baseline and rely on relative energy differences rather than an absolute zero level in the theory below.

\vspace{1mm}
\noindent \textbf{On the Choice of Physics-inspired Components.}
The potential $V(\mathbf{q}) = \sum_{k=1}^{K} \lambda_k\, r_k(\mathbf{q})$ can be instantiated from any finite family of differentiable regularity terms $\{r_k\}$. In this work, we adopt $K=2$ canonical terms: \emph{geometric smoothness} and \emph{photometric consistency}. This selection has three advantages: \textbf{(i)}~\emph{Lightweight}: they are evaluated directly on patch-level latent features without auxiliary networks; \textbf{(ii)}~\emph{Empirically violated}: current generative training objectives optimize distribution-level losses rather than such local regularity; and \textbf{(iii)}~\emph{Complementary}: smoothness acts on spatial gradients of $\mathbf{q}$ while photometric consistency acts on colour-channel relations, so the two terms respond to disjoint failure modes. Additional terms (\eg, frequency-domain priors, colour constancy, or temporal coherence for video) can be added without changes to the Hamiltonian integrator, so $K$ is a design knob rather than a fixed assumption. \Cref{tab:constraint_ablation} in \Cref{sec:ablation} isolates the contribution of the two components.

\subsubsection{Hamiltonian Dynamical Evolution}
\label{sec:symplectic}

To probe the stability of the latent state $\mathbf{q}$, we simulate its trajectory by numerically solving Hamilton's equations (\Cref{eq:hamilton}). We avoid standard solvers (\eg, Euler method) as they introduce numerical dissipation, causing artificial energy drift that may obscure stability-related signals.
Instead, we employ a \textbf{Symplectic Integrator} to explicitly preserve the phase-space volume and approximate energy conservation.
To account for varying feature sensitivities in the implementation, we introduce a learnable diagonal matrix $\mathbf{M}(\mathbf{q})$ and use $\mathbf{M}(\mathbf{q})^{-1}$ as a \emph{state-conditioned diagonal preconditioner} applied only in the position update of the canonical Hamiltonian system. The momentum update is driven solely by the potential gradient, while the position update rescales the momentum by the local inertia. Using the \textbf{Symplectic Euler} scheme with step size $\eta$, the state update from $t$ to $t+1$ is:
\begin{align}
    \mathbf{p}_{t+1} &= \mathbf{p}_t - \eta\, \nabla_{\mathbf{q}} V(\mathbf{q}_t), \label{eq:update_p} \\
    \mathbf{q}_{t+1} &= \mathbf{q}_t + \eta\, \mathbf{M}(\mathbf{q}_t)^{-1}\, \mathbf{p}_{t+1}. \label{eq:update_q}
\end{align}

Because $\mathbf{M}$ depends on $\mathbf{q}$, a fully general variable-mass Hamiltonian flow would include additional $\mathbf{q}$-derivative terms of $\mathbf{M}(\mathbf{q})^{-1}$ in the momentum update. Our implementation omits these terms for simplicity.
In our experiments, across $T{=}4$ Symplectic Euler steps and $N{=}1{,}920$ per-step measurements on testing datasets, the median ratio $\|\tfrac{1}{2}(\partial \mathbf{M}^{-1}/\partial \mathbf{q})\,\mathbf{p}^{2}\|\,/\,\|\nabla V\|$ is ${\sim}10^{-11}$ (max $1.85 \times 10^{-5}$), indicating that the omitted variable-mass terms are negligible by multiple orders of magnitude\footnote{At $t{=}0$ the ratio is identically zero since $\mathbf{p}_0{=}\mathbf{0}$. At $t \geq 1$ the median rises to ${\sim}10^{-8}$ but remains well below any practical threshold.}.
We therefore describe \Cref{eq:update_p,eq:update_q} as a \emph{Hamiltonian-inspired, symplectic-style stability probe} rather than an exact simulation of a variable-mass Hamiltonian flow. Specifically, the semi-implicit ordering (momentum before position) is strictly symplectic in the ideal constant-mass case (detailed in Appendix~\ref{app:symplectic_proof}) and is used here as a volume-conserving update family under the same ordering. Additional ablations appear in \Cref{sec:ablation}.

\vspace{1mm}
\noindent\textbf{The Gradient as Virtual Perturbation.} 
Since the system initializes at rest ($\mathbf{p}_0 = \mathbf{0}$), the gradient term $-\eta \nabla_{\mathbf{q}} V$ in \Cref{eq:update_p} acts as a \textbf{virtual perturbation force}. 
Under Assumption~\ref{ass:stability}, for real images residing in flat basins ($\|\nabla V\| \approx 0$), this force is negligible, and the learned preconditioner can further suppress movement, maintaining stationarity.
In contrast, for samples on steep slopes (as hypothesized for deepfakes), this significant force works in tandem with the preconditioned position update to deliver an initial `kick,' driving the state into a divergent trajectory and effectively amplifying the detection signal.

\subsubsection{Physics-Informed Inference}
\label{sec:inference}

After evolving the system for $T$ time steps, we extract two trajectory statistics to serve as the detection cues. All metrics are normalized by the number of patches $N$ to ensure resolution invariance.

\noindent\textbf{Hamiltonian Action Score ($S$).}
We define the Hamiltonian Action score as the \textbf{accumulated dynamical cost} of the system's evolution. It accumulates the total energy magnitude ($\mathcal{H} = T+V$) along the trajectory\footnote{We use the term \emph{Hamiltonian Action} (equivalently: \emph{Hamiltonian energy integral} or \emph{accumulated total energy}) to denote $S = \frac{1}{TN}\sum_{t=1}^{T}\mathcal{H}_t = \frac{1}{TN}\sum_t(T_t{+}V_t)$. This differs from the classical action integral $\int(T{-}V)\,\mathrm{d}t$, which is minimized by physical paths (Principle of Least Action). We accumulate $T{+}V$ rather than $T{-}V$ because the total energy magnitude grows monotonically with any departure from equilibrium, whereas $T{-}V$ can partially cancel and yields a weaker separator. More details are in Appendix~\ref{app:proofs}.}:
\begin{equation}
S = \frac{1}{T \cdot N} \sum_{t=1}^{T} \mathcal{H}(\mathbf{q}_t, \mathbf{p}_t).
\end{equation}

Under Assumption~\ref{ass:stability}, a high $S$ indicates that the sample accumulates substantial energy along its short-horizon trajectory, which is driven by both high potential elevation at the start and growing kinetic energy because of the gradient force.

\noindent\textbf{Energy Dissipation ($D$).}
We define $D$ as the mean absolute change in Hamiltonian value along the discrete trajectory:
\begin{equation}
D = \frac{1}{(T-1) \cdot N} \sum_{t=1}^{T-1} | \mathcal{H}_{t+1} - \mathcal{H}_t |.
\end{equation}

Rather than treating $D$ as a pure physical dissipation term, we use it as an empirical discriminative statistic under the chosen symplectic solver: real trajectories tend to remain approximately energy-conserving under our integrator, while fake trajectories driven by steeper potential gradients tend to produce larger drift.
Finally, we concatenate these trajectory features to form the feature vector $\mathbf{f}_{phy} = [S, D]$, which is fed into a linear classifier for the binary decision.

\subsection{Training Objective}
\label{sec:training}

Our training objective is a weighted sum of two terms to encourage: i) \textbf{Discriminability}, \ie, high detection accuracy, and ii) \textbf{Physical Consistency}, \ie, shaping the learned potential landscape to approximate the Manifold Stability Hypothesis,
\begin{equation}
    \mathcal{L}_{total} = \mathcal{L}_{cls} + \lambda \mathcal{L}_{phy},
\end{equation}
where $\lambda$ controls the strength of the physical constraint.

\noindent\textbf{Classification Loss.}
We pass $\mathbf{f}_{phy}$ through a linear classifier $\sigma(\cdot)$ to compute the logits. Let $\hat{y} = \sigma(\mathbf{f}_{phy})$ be the predicted probability of fake ($y=1$) \vs real ($y=0$). We minimize the Binary Cross-Entropy (BCE) loss:
\begin{equation}
    \mathcal{L}_{cls} = - \left[ y \log(\hat{y}) + (1-y) \log(1 - \hat{y}) \right].
\end{equation}

\noindent\textbf{Physical Regularization Loss.}
To {encourage} the Hamiltonian behavior predicted by the Manifold Stability Hypothesis, we add a soft regularizer that shapes the trajectory statistics of real and fake samples in opposite directions:

\textbf{Stability-Encouragement Term for Real Data:}
We encourage real images to localize in low-$V$ basins with quasi-conservative dynamics by minimizing their $S$ and $D$. For the real batch $\mathcal{B}_{real}$:
\begin{equation}
    \mathcal{L}_{real} = \mathbb{E}_{\mathbf{x} \in \mathcal{B}_{real}} [S(\mathbf{x}) + D(\mathbf{x})].
\end{equation}

\textbf{Instability-Penalization Term for Fake Data:}
We penalize fake samples whose trajectories do not diverge sufficiently by applying a hinge loss with margin $\gamma$ to their Action, effectively encouraging their embeddings toward higher-gradient regions of the potential surface:
\begin{equation}
    \mathcal{L}_{fake} = \mathbb{E}_{\mathbf{x} \in \mathcal{B}_{fake}} [\max(0, \gamma - S(\mathbf{x}))].
\end{equation}

The total physical regularization is $\mathcal{L}_{phy} = \mathcal{L}_{real} + \mathcal{L}_{fake}$. 

\section{Experiment}
\label{sec_exp}
\subsection{Deepfake Detection}

\begin{table*}[t]
\centering
\caption{Performance comparison with SoTA methods on cross-dataset and cross-method evaluation. All models are trained on FF++ and tested on different datasets using the AUC metric. The best results are highlighted in \textbf{bold}. $\dagger$: we re-implemented this model.}
\label{tab:deepfake_comparison}
\resizebox{\textwidth}{!}{%
\begin{tabular}{l|cccccccc|ccccccccc}
\toprule \midrule
\multirow{2}{*}{Methods}                                                         & \multicolumn{8}{c|}{Cross-dataset Evaluation}                                                                                                & \multicolumn{9}{c}{Cross-method Evaluation}                                                                                                            \\ \cmidrule(lr){2-9} \cmidrule(lr){10-18}  
& CDF++ & CDF-v2         & DFD            & DFDC           & DFo            & WDF            & FFIW           & Avg.      & UniFace        & BleFace        & MobSwap        & e4s            & FaceDan        & FSGAN          & InSwap         & SimSwap        & Avg.           \\ \midrule
F3Net~\cite{F3Net}                                         & 0.738                      & 0.789          & 0.844          & 0.718          & 0.730          & 0.728          & 0.649          & 0.743     & 0.809          & 0.808          & 0.867          & 0.494          & 0.717          & 0.845          & 0.757          & 0.674          & 0.746          \\
SPSL~\cite{rethink}                                        & 0.744                      & 0.799          & 0.871          & 0.724          & 0.723          & 0.702          & 0.794          & 0.769     & 0.747          & 0.748          & 0.885          & 0.514          & 0.666          & 0.812          & 0.643          & 0.665          & 0.710          \\
SRM~\cite{SRM}                                             & 0.794                      & 0.840          & 0.885          & 0.695          & 0.722          & 0.702          & 0.794          & 0.767     & 0.749          & 0.704          & 0.779          & 0.704          & 0.659          & 0.772          & 0.793          & 0.694          & 0.732          \\
CORE~\cite{CORE}                                           & 0.749                      & 0.809          & 0.882          & 0.721          & 0.765          & 0.724          & 0.710          & 0.762     & 0.871          & 0.843          & 0.959          & 0.679          & 0.774          & 0.958          & 0.855          & 0.724          & 0.833          \\
RECCE~\cite{Face_Reconstruction}                          & 0.808                      & 0.823          & 0.891          & 0.696          & 0.784          & 0.756          & 0.711          & 0.779     & 0.898          & 0.832          & 0.925          & 0.683          & 0.848          & 0.949          & 0.848          & 0.768          & 0.844          \\
SBI~\cite{SBI_ShioharaY22}                                & 0.734                      & 0.886          & 0.827          & 0.717          & 0.899          & 0.703          & 0.866          & 0.821     & 0.724          & 0.891          & 0.952          & 0.750          & 0.594          & 0.803          & 0.712          & 0.701          & 0.766          \\
UCF~\cite{UCF_0002ZFW23}                                  & 0.761                      & 0.837          & 0.867          & 0.742          & 0.808          & 0.774          & 0.697          & 0.785     & 0.831          & 0.827          & 0.950          & 0.731          & 0.862          & 0.937          & 0.809          & 0.647          & 0.824          \\
IID~\cite{CADDM}                                           & 0.756                      & 0.838          & 0.939          & 0.700          & 0.808          & 0.666          & 0.762          & 0.789     & 0.839          & 0.789          & 0.888          & 0.766          & 0.844          & 0.927          & 0.789          & 0.644          & 0.811          \\
LSDA~\cite{CVPR24_Yan_Aug}                     & 0.727                      & 0.875          & 0.881          & 0.701          & 0.768          & 0.797          & 0.724          & 0.794     & 0.872          & 0.875          & 0.930          & 0.694          & 0.721          & 0.939          & 0.855          & 0.793          & 0.835          \\
Effort$^\dagger$~\cite{Effort}        & 0.831       & 0.945          & 0.951          & 0.840          & 0.965          & 0.827          & 0.901          & 0.894   & 0.950          & 0.868          & 0.950          & 0.987          & 0.935          & 0.951          & 0.915          & 0.908          &  0.933        \\
VLFFD$^\dagger$~\cite{VLFFD}                               & 0.818                      & 0.938          & 0.940          & 0.809          & 0.946          & 0.831          & 0.898          &  0.883   & 0.875          & 0.877          & 0.911          & 0.921          & 0.895          & 0.851          & 0.837          & 0.840          &  0.876     \\
DAID$^\dagger$~\cite{cheng2025fair}                        & 0.779                      & 0.909          & 0.925          & 0.768          & 0.884          & 0.801          & 0.883          &  0.850  & 0.902          & 0.849          & 0.874          & 0.838          & 0.917          & 0.935          & 0.801          & 0.744          & 0.858     \\
$\mathcal{X}^2$-DFD$^\dagger$~\cite{X2-dfd}                & 0.812                      & 0.951          & 0.959          & 0.841          & 0.959          & 0.841          & 0.915          &  0.896   & 0.874          & 0.873          & 0.923          & 0.912          & 0.879          & 0.899          & 0.820          & 0.863          &  0.880  \\
FIA-USA$^\dagger$~\cite{ma2025specificity} & 0.812    & 0.935  & 0.926   & 0.732   & 0.874   & 0.810 & 0.878   & 0.852   & 0.918     & 0.854   & 0.842   & 0.875   & 0.881  & 0.863   & 0.874          & 0.910          & 0.877  \\ \midrule

\rowcolor{blue!5} {HAAD (Ours)}         & \textbf{0.878}     & \textbf{0.963} & \textbf{0.975} & \textbf{0.868} & \textbf{0.979} & \textbf{0.851} & \textbf{0.932} & \textbf{0.921} & \textbf{0.971} & \textbf{0.874} & \textbf{0.964} & \textbf{0.990} & \textbf{0.939} & \textbf{0.968} & \textbf{0.930} & \textbf{0.917} & \textbf{0.944} \\ 
\midrule \bottomrule
\end{tabular}%
}
\vspace{-1em}
\end{table*}

We first performed detection on face manipulation, \ie, standard deepfake benchmarks.
Following the previous protocols~\cite{DeepfakeBench}, we trained our model on the \textbf{FaceForensics++ (FF++)}~\cite{Xception} (c23) dataset and evaluated on unseen benchmarks. We utilized the Area Under the Curve (AUC) as the metric.
For video benchmarks, we uniformly sample 8 frames per video. 
For video benchmarks, the reported metric is video-level AUC, obtained by averaging per-frame prediction scores within each video, following the unified evaluation protocol~\cite{DeepfakeBench}. Baseline numbers are sourced from the same protocol where possible, or from the original papers under the identical setting.
Our framework employs CLIP ViT-L/14~\cite{CLIP} as the backbone following recent studies~\cite{Effort}.
Training is conducted on a single NVIDIA H100 GPU with Adam optimizer ($\text{lr}\!=\!2e^{-4}$). The batch size is set to 32.
To assess generalization, we evaluated HAAD on two settings:
i) \textbf{Cross-Dataset:} We employed {Celeb-DF++}~\cite{Celeb-DF++}, {CDF-v2}~\cite{Celeb-DF}, {DFDC}~\cite{dfdc}, {DFD}~\cite{dfd2019}, {DFo}~\cite{DFo}, {WildDeepfake}~\cite{WildDeepfake}, and {FFIW}~\cite{FFIW} as the testing sets.
ii) \textbf{Cross-Manipulation:} We applied {DF40}~\cite{DF40}, which contains unseen forgery types generated within the FF++ domain.

\noindent \textit{{\textbf{Experimental design rationale.}}}
The cross-domain transfer serves as the empirically independent test of the Manifold Stability Hypothesis: a learned $V$ that captured only dataset-specific artifacts would fail to generalize to unseen generators, whereas physics-motivated regularities that reflect genuine structural properties of image formation should transfer.

\noindent \textit{\textbf{Main Results.}}
\Cref{tab:deepfake_comparison} presents a comparison of HAAD against 14 SoTA detectors.
Our method achieves the highest average AUC among the evaluated baselines across the tested settings.
Specifically, in the cross-dataset setting, HAAD achieves an average AUC of 0.921, outperforming all of the competing baselines.
Furthermore, in the challenging CDF++ benchmark, it surpasses the second-best baseline by approximately 5\,pp.
In the cross-manipulation setting, HAAD achieves the highest average AUC among the evaluated methods (0.944).
These results indicate that the Hamiltonian stability constraints capture forgery-related signals that generalize beyond specific manipulation patterns.

\begin{table*}[t]
\centering
\caption{Performance comparison on the GenImage dataset. All models are trained on SDv1.4 and tested on different subsets using the Acc metric. The best results are highlighted in \textbf{bold}. $\dagger$: we re-implemented this model.}
\label{tab:GEN_Image_comparison}
\scalebox{0.95}{
\begin{tabular}{l c c c c c c c c c}
\toprule \midrule
\multicolumn{1}{c}{Methods} & Midjourney & SDv1.4 & SDv1.5 & ADM & GLIDE & Wukong & VQDM & BigGAN & Avg. \\
\midrule
CNNSpot~\cite{CNNSpot}             & 0.528 & 0.963 & 0.959 & 0.501 & 0.398 & 0.786 & 0.534 & 0.468 & 0.642 \\
F3Net~\cite{F3Net}                 & 0.501 & 0.997 & 0.995 & 0.499 & 0.500 & 0.963 & 0.499 & 0.499 & 0.680 \\
UnivFD~\cite{ojha2023towards}      & 0.915 & 0.964 & 0.961 & 0.581 & 0.734 & 0.945 & 0.678 & 0.577 & 0.795 \\
NPR~\cite{Tan_CVPR24}              & 0.810 & 0.982 & 0.979 & 0.769 & 0.898 & 0.969 & 0.841 & 0.842 & 0.886 \\
FreqNet~\cite{FreqNet}             & 0.896 & 0.988 & 0.986 & 0.668 & 0.865 & 0.973 & 0.758 & 0.814 & 0.868 \\
DRCT~\cite{chen2024drct}           & 0.874 & 0.893 & 0.902 & 0.779 & 0.891 & 0.947 & 0.903 & 0.859 & 0.881 \\
Effort$^\dagger$~\cite{Effort}     & 0.802 & 0.989 & 0.983 & 0.759 & 0.912 & 0.968 & 0.900 & 0.835 & 0.894 \\
DDA$^\dagger$~\cite{DDA}           & 0.902 & 0.985 & 0.972 & 0.809 & 0.876 & 0.954 & 0.735 & 0.842 & 0.884 \\
\midrule
\rowcolor{blue!5}
HAAD (Ours) & \textbf{0.903} & \textbf{0.998} & \textbf{0.997} & \textbf{0.817} & \textbf{0.951} & \textbf{0.986} & \textbf{0.928} & \textbf{0.870} & \textbf{0.931} \\
\midrule \bottomrule
\end{tabular}%
}
\vspace{-1em}
\end{table*}

\subsection{Synthetic Image Detection}
We utilized the GenImage~\cite{GenImage} benchmark. Following standard protocols~\cite{Effort}, models are trained on the SDv1.4 subset and evaluated on the remaining unseen subsets. We reported Accuracy (Acc) against 8 baseline models in \Cref{tab:GEN_Image_comparison}. HAAD achieves the highest average Acc among the evaluated baselines (0.931). For instance, on the GLIDE subset, our approach surpasses the second-best baseline by approximately 4 percentage points. 
Furthermore, HAAD attains the highest Acc in the intra-domain setting (\ie, when evaluated in the SDv1.4 source domain).
These results indicate that the physics-inspired stability prior generalizes well to unseen generative models in standard transfer settings, despite being trained on a single source domain.

\subsection{Ablation Studies}
\label{sec:ablation}

\begin{table}[t]
\centering
\caption{Ablation studies on the contribution of physical components. We reported the AUC scores on the cross-dataset protocol.}
\label{tab:ablation_component}
    \scalebox{0.75}{
\begin{tabular}{l c c c | c c}
\toprule \midrule
\textbf{Variant} & \textbf{Potential $V$} & \textbf{Action ($S$)} & \textbf{Dissipation ($D$)} & \textbf{CDF++} & \textbf{DFDC} \\
\midrule
Backbone only                              & -          & -          & -         & 0.734 & 0.732 \\
+Static $V$                                & \checkmark & -          & -          & 0.845 & 0.851 \\
+Static $[V_0,\|\nabla V_0\|]$             & \checkmark & -          & -          & 0.848 & 0.853 \\
+Action $S$ ($T{=}4$)                      & \checkmark & \checkmark & -          & 0.860 & 0.858 \\
+Action $S$ ($\lambda{=}0$)                & \checkmark & \checkmark & -          & 0.852 & 0.854 \\
\rowcolor{blue!5} \textbf{HAAD (Ours)}       & \checkmark & \checkmark & \checkmark & \textbf{0.878} & \textbf{0.868} \\
\midrule \bottomrule
\end{tabular}%
}
\vspace{-1em}
\end{table}
\begin{table}[t]
    \centering
    \caption{Comparison of different potential surface construction. We reported the AUC scores on the cross-dataset protocol.}
        \label{tab:ablations_archs}
         \scalebox{0.80}{
            \begin{tabular}{l|cccccc}
                \toprule \midrule
                \textbf{Topology} & \textbf{CDF++} & \textbf{CDF-v2} & \textbf{DFD} & \textbf{DFDC} & \textbf{DFo} & \textbf{Avg}\\
                \midrule
                Independent (MLP)                  & 0.855 & 0.951  & 0.962    & 0.846    &  0.970  & 0.917 \\
                Global (Self-Attn)                 & 0.869 & 0.958  & 0.971    & 0.851    &  0.977  & 0.925\\
                \rowcolor{blue!5} \textbf{Graph (Ours)} & \textbf{0.878} & \textbf{0.963} & \textbf{0.975}    & \textbf{0.868}    &  \textbf{0.979}  & \textbf{0.933} \\
                \midrule \bottomrule
            \end{tabular}}
            \vspace{-1em}
\end{table}

\noindent \textit{\textbf{Component Effectiveness Analysis.}}
We ablate five variants in \Cref{tab:ablation_component} starting from the standalone CLIP ViT-L/14 backbone:
\textbf{i)}~\emph{Static $V$}: potential as soft regularizer, no trajectory features;
\textbf{ii)}~\emph{Static $[V_0,\|\nabla V_0\|]$}: static potential and gradient magnitude as classifier inputs, no rollout;
\textbf{iii)}~\emph{Action $S$}: $T{=}4$ symplectic rollout with $\mathcal{L}_{phy}$;
\textbf{iv)}~\emph{Action $S$, $\lambda{=}0$}: same rollout, $\mathcal{L}_{cls}$ only; and
\textbf{v)}~\emph{Full HAAD}: complete model.
From \Cref{tab:ablation_component}, we have two main observations:
\noindent\textit{(1) Every component is indispensable.}
Each design choice contributes measurable and consistent gains. For instance, removing only $D$ from Full HAAD degrades CDF++ AUC from 0.878 to 0.860, and replacing $\mathcal{L}_{phy}$ with pure classification supervision (variant iv) likewise causes a consistent drop on both datasets, indicating that the trajectory statistics and physical regularization ($\mathcal{L}_{phy}$) each provide independent discriminative signal.
\noindent\textit{(2) The rollout is the key amplifier, and $D$ captures complementary trajectory curvature.}
Directly classifying with static features (variant ii) adds only a marginal gain over the regularizer-only baseline (variant i), whereas the $T{=}4$ symplectic rollout (variant iii) yields +1.2\,pp on CDF++ over the same reference, confirming that trajectory-accumulated kinetic growth is far more discriminative than any single-point gradient readout, consistent with Proposition~\ref{prop:divergence}. Full HAAD (variant v) achieves 0.878 on CDF++ and 0.868 on DFDC, with $D$ contributing the largest single step. A possible reason is that $D$ measures step-to-step Hamiltonian volatility rather than accumulated energy, thereby capturing the curvature of $V$ traversed along the trajectory, which is a complementary to $S$.

\vspace{1mm}
\noindent \textit{\textbf{Impact of Potential Energy Architectures.}}
We further justified the choice of constructing the potential surface via \textbf{Graph Topology} by comparing it with two alternatives:
i) \textbf{Independent MLP}, processing patches in isolation, and
ii) \textbf{Global Self-Attention}, computing fully-connected correlations.
Table~\ref{tab:ablations_archs} shows that our graph-based potential achieves the highest average AUC among the tested constructions.
We attributed this to the physical inductive bias. Unlike global attention or isolated processing, the graph Laplacian explicitly acts as a discrete differential operator. It encourages \emph{local} physical continuity, making it sensitive to high-frequency boundary artifacts and blending discontinuities typical of deepfakes.

\vspace{1mm}
%
%
\begin{table}[t]
\centering
\caption{Ablation studies on the factorization of the potential surface. We reported the AUC scores on the cross-dataset protocol. $\Delta$ is the average-AUC drop (pp).}
\label{tab:constraint_ablation}
    \scalebox{0.90}{
\begin{tabular}{l|cc|ccc|c}
\toprule \midrule
\textbf{Variant} & $\lambda_{geo}$ & $\lambda_{photo}$ & \textbf{FF++} & \textbf{CDF-v2} & \textbf{DFDC} & $\Delta$ \\
\midrule
smooth-only                      & \checkmark & --         & 0.993          & 0.954          & 0.844          & 1.2 \\
photo-only                       & --         & \checkmark & 0.994          & 0.948          & 0.852          & 1.1 \\
\rowcolor{blue!5} \textbf{both} (Ours) & \checkmark & \checkmark & \textbf{0.995} & \textbf{0.963} & \textbf{0.868} & 0.0 \\
\midrule \bottomrule
\end{tabular}%
}
\vspace{-1em}
\end{table}

\noindent \textit{\textbf{Choice of the $K{=}2$ Constraint Decomposition.}}
To validate the two-term decomposition, we ablated the two gates of $V(\mathbf{q})\,{=}\,\lambda_{geo} V_{geo}\,{+}\,\lambda_{photo} V_{photo}$ while holding every other ingredient fixed.
Specifically, three variants are compared:
i) \textbf{smooth-only} ($\lambda_{geo}{>}0,\,\lambda_{photo}{=}0$), retaining the graph Laplacian but disabling shading;
ii) \textbf{photo-only} ($\lambda_{geo}{=}0,\,\lambda_{photo}{>}0$), retaining Lambertian residuals but disabling geometric smoothness; and
iii) \textbf{both} (Ours), which activates both terms.
As shown in \Cref{tab:constraint_ablation}, disabling the photometric term (smooth-only) drops the average AUC by 1.2 pp, while disabling the geometric term (photo-only) drops it by 1.1\,pp. 
We draw two conclusions from these observations: i) Each term contributes independently, and neither alone carries the full improvement. ii) The geometric and photometric regularities target disjoint failure modes, with each providing discriminative signal the other cannot supply.

\vspace{1mm}
\begin{table}[t]
        \centering
        \caption{Ablation studies on numerical solvers. We reported AUC scores on the cross-dataset protocol. In addition, we compared the efficiency of different solvers.}
        \label{tab:abl_solver}
        \scalebox{0.85}{
            \begin{tabular}{l|ccccc|cc}
                \toprule \midrule
                \textbf{Solver} & \textbf{CDF++} & \textbf{CDF-v2} & \textbf{DFD} & \textbf{DFDC} & \textbf{DFo} & \textbf{Time} & \textbf{Mem.} \\
                \midrule
                Euler (1st)       & 0.853 & 0.940 & 0.957 & 0.843 & 0.944  & \textbf{12ms} & 2480M \\
                RK4 (4th)         & 0.875 & 0.959 & 0.971 & 0.863 & 0.975 & 45ms & 3100M \\
                \rowcolor{blue!5} HAAD & \textbf{0.878} & \textbf{0.963} & \textbf{0.975}   & \textbf{0.868}   &  \textbf{0.979}   & 14ms & \textbf{2485M} \\
                \midrule \bottomrule
            \end{tabular}}
\end{table}

\noindent \textit{\textbf{Choice of Numerical Integrator.}}
The Symplectic Euler integrator preserves phase-space volume and maintains approximate energy conservation, keeping the trajectory statistic $D$ sensitive to data-dependent potential gradients. To verify this advantage, we compared our \textbf{Symplectic Euler} against two standard alternatives: i) the first-order \textbf{Euler Method} and ii) the fourth-order \textbf{Runge-Kutta (RK4)}.
As shown in \Cref{tab:abl_solver}, the Euler method yields the poorest performance due to its inherent lack of energy conservation, introducing significant artificial drift. 
Although RK4 offers higher precision, it is non-symplectic and incurs substantial cost (45\,ms, $+650$\,MB) due to multiple gradient evaluations.
In contrast, the proposed Symplectic Euler achieves the best trade-off. It outperforms RK4 while adding only 2.5\,ms and 35\,MB over the CLIP ViT-L/14 backbone (11.5\,ms / 2450\,MB). Moreover, its symplectic property preserves phase-space volume, which reduces solver-induced energy drift relative to non-symplectic methods. While this does not eliminate all numerical effects, it makes dissipation ($D$) a more reliable empirical discriminative signal.

\vspace{1mm}
%
%
\begin{table}[t]
\centering
\caption{Mass-matrix ablation for the position update. We reported the AUC scores on the cross-dataset protocol. $\Delta$ is the average-AUC drop (pp).}
\label{tab:abl_mass_const}
    \scalebox{0.85}{
\begin{tabular}{l|c|cccc|c}
\toprule \midrule
\textbf{Variant} & $\mathbf{M}(\mathbf{q})$ & \textbf{CDF-v2} & \textbf{DFDC} & \textbf{FF++} & \textbf{Avg} & $\Delta$ \\
\midrule
Constant mass ($\mathbf{M}{=}\mathbf{I}$)                          & --         & 0.949          & 0.844          & 0.992          & 0.928 & 1.4 \\
\rowcolor{blue!5} \textbf{Learned $\mathbf{M}(\mathbf{q})$ (Ours)} & \checkmark & \textbf{0.963} & \textbf{0.868} & \textbf{0.995} & \textbf{0.942} & 0.0 \\
\midrule \bottomrule
\end{tabular}%
}
\vspace{-1em}
\end{table}

\noindent \textit{\textbf{Mass-Matrix Sensitivity.}}
The state-conditioned diagonal preconditioner $\mathbf{M}(\mathbf{q})^{-1}$ in the position update (\Cref{eq:update_q}) assigns different effective step sizes to different patch feature dimensions, allowing the rollout dynamics to adaptively amplify position updates along the dimensions most sensitive to fake-related instability while damping updates along stable, consistent ones. To verify this contribution, we compare full HAAD against a variant with constant identity mass ($\mathbf{M}{=}\mathbf{I}$), holding all other components fixed.
As reported in \Cref{tab:abl_mass_const}, the learned preconditioner consistently improves detection: CDF-v2 AUC increases from 0.949 to 0.963 and the three-set average from 0.928 to 0.942 (+1.4\,pp). This indicates that $\mathbf{M}(\mathbf{q})^{-1}$ successfully learns to focus the trajectory dynamics on the most discriminative feature directions, producing sharper trajectory statistics than uniform-step dynamics alone.

\vspace{1mm}
\noindent \textit{\textbf{Impact of Physical Regularization Weight ($\lambda$)}.}
The weight $\lambda$ controls the balance between classification and physical consistency. \Cref{tab:sens_lambda} shows that performance peaks at $\lambda=1.0$ (0.878). Values below this threshold provide insufficient constraints to shape the manifold, while larger values ($\lambda > 1.0$) cause the regularization term to dominate the optimization, leading to suboptimal discrimination.

\begin{table}[t]
\centering
\caption{Sensitivity to physical loss weight $\lambda$ on CDF++.}
\label{tab:sens_lambda}
\scalebox{0.85}{
\begin{tabular}{l|cccccccc}
\toprule \midrule
$\lambda$ & 0.0 & 0.1 &0.3 &0.5 & \textbf{1.0} & 2.0 &3.0 & 5.0 \\
\midrule
AUC & 0.845 & 0.862 &0.867 & 0.871 & \textbf{0.878} & 0.874 &0.869 & 0.860 \\
\midrule \bottomrule
\end{tabular}
}
\end{table}

\vspace{1mm}
\noindent \textit{\textbf{Sensitivity to Evolution Steps.}}
The number of evolution steps $T$ controls how far the trajectory probe extends. To evaluate sensitivity to trajectory length, we varied $T$ from 1 to 10 and reported the performance in \Cref{fig:sensitive_steps}.
We observed that at $T=1$, the model behaves similarly to a standard gradient-based regularization, yielding suboptimal performance. The detection accuracy improves rapidly as $T$ increases, peaking around $T=4$. This indicates that a short-term trajectory is sufficient to reveal the stability gap: real samples remain bounded while fakes begin to diverge. 
Further increasing $T$ leads to a slight saturation. We attributed this to the local nature of the learned potential, \ie, long trajectories may drift into poorly constrained regions far from the valid distribution.

\begin{figure}[t]
    \centering
    \includegraphics[width=0.45\textwidth]{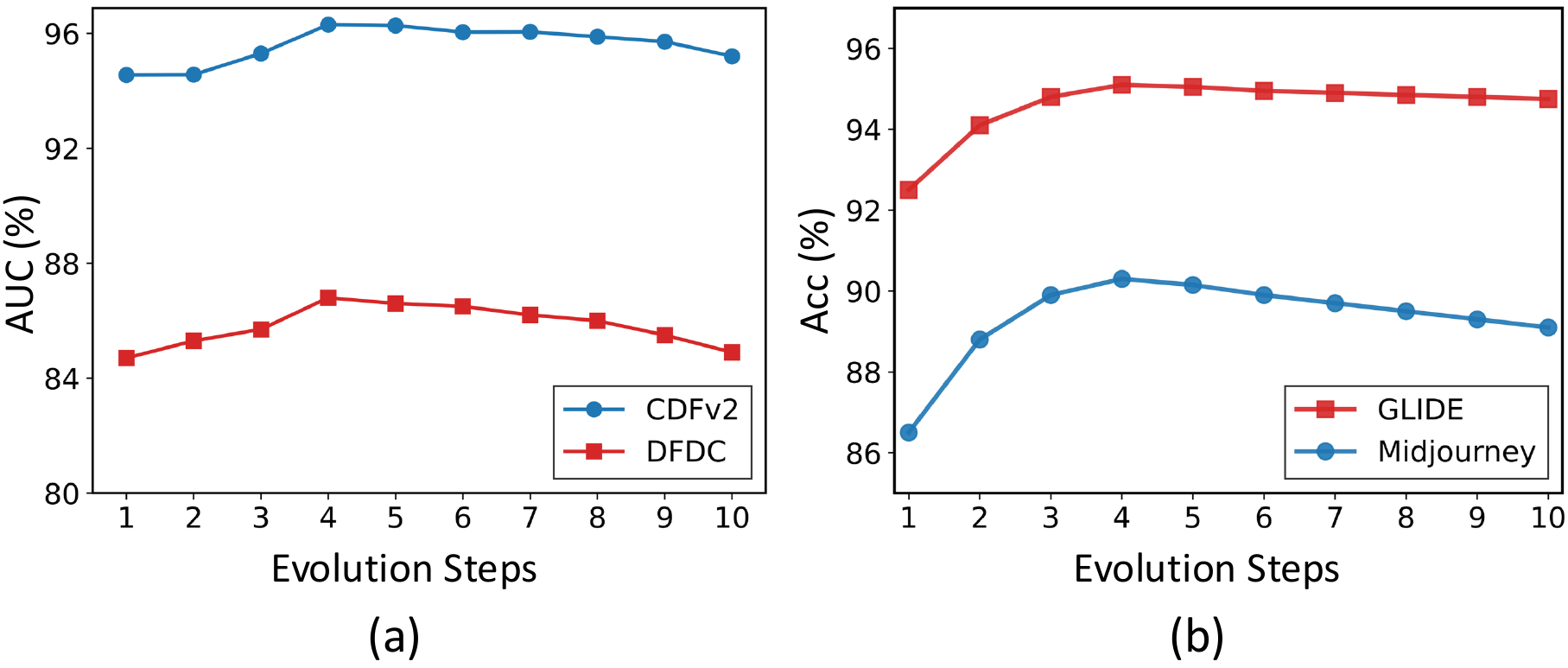}
    \vspace{-1em}
    \caption{Ablation studies on evolution steps. The detection performance for varying $T$ is reported on (a) deepfake benchmarks and (b) GenImage subsets.}
    \label{fig:sensitive_steps}
\end{figure}

\subsection{Qualitative Analysis}

\begin{figure}[t]
    \centering
    \includegraphics[width=0.45\textwidth]{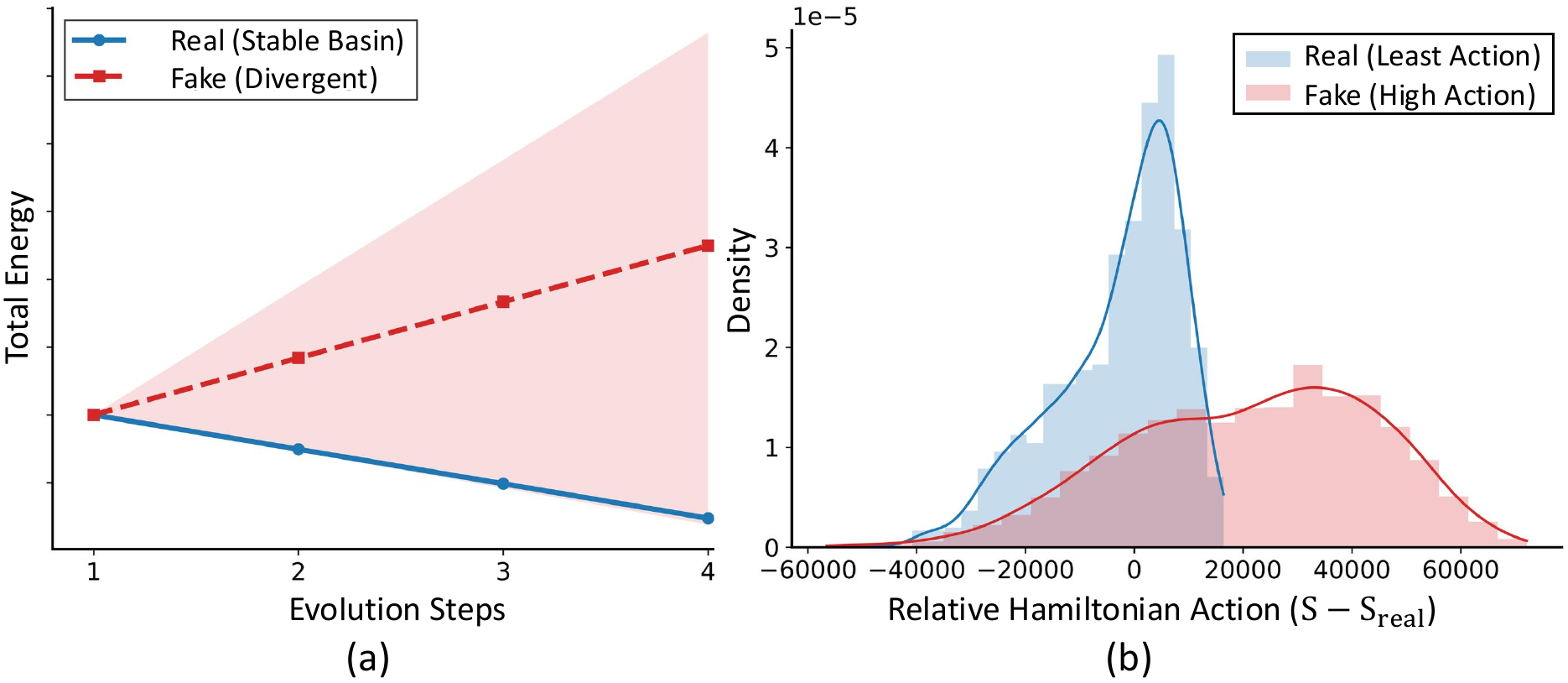}
    \caption{(a) {Energy evolution trajectories.} Real images (blue) exhibit bounded stability, whereas deepfakes (red) demonstrate a distinct divergent trend. We plotted the median normalized energy with the interquartile range (shaded). (b) {Distribution of relative Hamiltonian Action ($S-S_{real}$).} The sharp peak for real data \vs the heavy-tailed distribution for fakes highlights the discriminative power of this stability-based metric.}
    \label{fig:trag_and_invariants}
\end{figure}

\noindent \textit{\textbf{Dynamic Evolution Trajectories.}}
Figure~\ref{fig:trag_and_invariants}a illustrates the normalized energy trajectories, plotting the median value (solid line) and the interquartile range (shaded region). 
The results reveal a clear contrast: real images (blue) maintain \emph{bounded, quasi-stable orbits}, consistent with placement in deep potential basins dominated by restorative forces.
In contrast, deepfake samples (red) exhibit rapid energy divergence even within limited steps ($T=4$). This is consistent with the hypothesis that forged content occupies unstable high-potential states, where significant gradient forces act as persistent accelerators, driving the particle state away from equilibrium.

\noindent \textit{\textbf{Separability of the Action Statistic.}}
Figure~\ref{fig:trag_and_invariants}b displays the density histogram of the relative Hamiltonian Action ($S-S_{real}$), where the total area under each curve is normalized to 1.
Real samples (blue) form a sharp, narrow peak clustered tightly around zero. This is consistent with real images residing in low-energy basins with minimal fluctuation under the learned potential.
In contrast, fake images (red) exhibit a \emph{heavy-tailed distribution} spanning a wide range of large positive relative-action values. This dispersion reflects the diverse nature of generative artifacts, which place samples at various unstable elevations on the potential surface.
The distinct separation between these two distributions indicates that $S$ serves as an effective linear separator for detection.

\begin{figure}[t]
    \centering
    \includegraphics[width=0.45\textwidth]{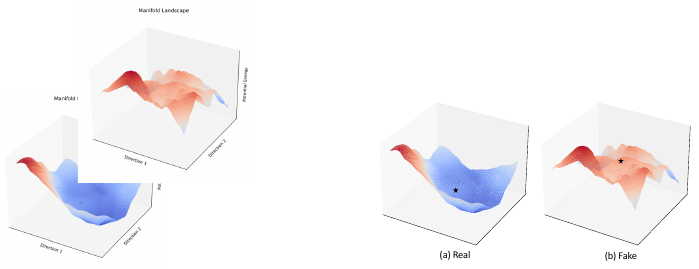}
    \caption{{Potential landscape visualization.} We visualize the learned energy surface by projecting the latent neighborhood onto random orthogonal directions. (a) Around a representative real face the learned surface forms a smooth basin. (b) Around a representative fake face the learned surface contains a high-curvature steep region associated with larger trajectory responses.}
    \label{fig:potential_landscape}
\end{figure}

\vspace{1mm}
\noindent \textit{\textbf{3D Potential Landscape Visualization.}}
To visualize the learned energy landscape, we projected the local neighborhood of samples onto random orthogonal directions.
The landscape around a real face (Figure~\ref{fig:potential_landscape}a) manifests as a \emph{smooth, convex basin}, with the data point located at a stable local minimum. This is consistent with natural images residing near equilibria under the learned potential.
In contrast, the landscape for a deepfake (Figure~\ref{fig:potential_landscape}b) reveals a \emph{high-curvature steep region} rather than a stable basin. The forged sample is located in an unstable high-potential area with large local gradients.

\subsection{Robustness and Scope Analysis}
\label{sec:robustness}
To delineate HAAD's practical scope, we evaluated two deployment-relevant axes: i) Common post-processing corruptions at test time, and ii) Backbone choice.

\begin{table}[t]
\centering
\caption{Common-corruption robustness. We reported AUC on CDF-v2. $\Delta_{\max}$ is the worst-case AUC drop relative to clean.}
\label{tab:corrupt}
    \scalebox{0.72}{
\begin{tabular}{l|c|cc|c|c|c}
\toprule \midrule
\multirow{2}{*}{\textbf{Method}} & \multirow{2}{*}{\textbf{Clean}}
    & \multicolumn{2}{c|}{\textbf{JPEG Compression}}
    & \textbf{Gaussian Blur}
    & \textbf{Gaussian Noise}
    & \multirow{2}{*}{$\Delta_{\max}\downarrow$} \\
 & & Q{=}70 & Q{=}50 & ($\sigma{=}3$) & ($\sigma^{2}{=}0.01$) & \\
\midrule
EfficientNet                      & 0.646          & 0.582          & 0.522          & 0.543          & 0.519          & 0.127 \\
SBI                         & 0.886          & 0.846          & 0.753          & 0.784          & 0.762          & 0.133 \\
Effort                               & 0.945          & 0.873          & 0.801          & 0.826          & 0.798          & 0.147 \\
\rowcolor{blue!5} \textbf{HAAD}             & \textbf{0.963} & \textbf{0.925} & \textbf{0.904} & \textbf{0.882} & \textbf{0.857} & \textbf{0.106} \\
\midrule \bottomrule
\end{tabular}%
}
\vspace{-1em}
\end{table}

\noindent\textit{\textbf{Common Post-processing Corruptions.}}
Real-world forensic pipelines re-encode, resize and re-share images before a detector sees them. To simulate this, we evaluated HAAD and three representative baselines with applying a standard set of test-time corruptions: JPEG compression at two quality levels ($Q{=}70$ and $Q{=}50$), Gaussian blur ($\sigma{=}3$), and additive Gaussian noise ($\sigma^{2}{=}0.01$). All detectors remain the same clean-trained checkpoint, without any corruption-aware retraining.
\Cref{tab:corrupt} shows that HAAD remains the strongest detector among the evaluated models under every tested corruption, and that its worst-case degradation is smaller than that observed for the baselines. This is consistent with the central framing of HAAD as a \emph{stability probe}: if the discriminative signal is the short-horizon stability of the learned potential rather than a fragile high-frequency artifact, then post-processing, which perturbs local pixel statistics more than it perturbs the smoothed manifold geometry, would be expected to degrade HAAD less than artifact-based detectors.

\vspace{1mm}
\begin{table}[t]
\centering
\caption{Backbone sensitivity evaluation. We reported the AUC scores of two different backbones on CDF-v2.}
\label{tab:backbone}
    \scalebox{0.90}{
\begin{tabular}{l|cc|c}
\toprule \midrule
\textbf{Backbone} & \textbf{Base} & \textbf{+HAAD} & $\Delta$ \\
\midrule
ResNet-50~\cite{ResNet}              & 0.683 & \textbf{0.806} & +0.123 \\
ViT-B/16                    & 0.915 & \textbf{0.948} & +0.033 \\
\midrule \bottomrule
\end{tabular}%
}
\vspace{-1em}
\end{table}

\noindent\textit{\textbf{Backbone Sensitivity.}}
To evaluate whether the Hamiltonian stability probe generalizes across architectures, we applied the HAAD head to two additional backbones: ResNet-50~\cite{ResNet} and ViT-B/16.
As shown in \Cref{tab:backbone}, HAAD improves every backbone. For instance, it improves the ResNet-50 baseline by 12.3\,pp. This indicates that $S$ and $D$ serve as an effective detection prior across diverse backbone architectures.

\section{Conclusion and Discussion}
\label{sec:conc}
In this paper, we propose a new perspective on deepfake detection: moving from static pattern recognition to \textbf{dynamical stability analysis}.
By conceptualizing the latent features as a physical energy landscape, we propose \textbf{HAAD}, a framework that complements artifact-based detection by probing dynamical stability across generative architectures.
Our core contribution is the operationalization of the Manifold Stability Hypothesis via a symplectic evolution scheme.
Furthermore, the short-horizon symplectic evolution amplifies microscopic structural irregularities into measurable trajectory differences, summarized by the Hamiltonian Action and Energy Dissipation.
Experimental results demonstrate that the proposed stability-based prior generalizes across generators and datasets, outperforming several SoTA baselines.

\noindent\textbf{Limitations and Societal Considerations.}
HAAD is a stability-based probe and its effectiveness depends on the learned potential capturing structural regularities that current generators do not explicitly enforce. The Manifold Stability Hypothesis is distributional and does not provide per-sample guarantees. Like all published detectors, HAAD participates in the ongoing detector-generator arms race: future generators may learn to suppress the measured stability cues, which motivates continued development of physics-grounded detection priors. Because forensic false positives can carry reputational and legal consequences, we view HAAD as a decision-support signal that should be used alongside provenance analysis, contextual evidence, and human review.

\bibliographystyle{IEEEtran}
\bibliography{__Main}

\clearpage
\appendices
\section{Theoretical Motivation for the Manifold Stability Hypothesis}
\label{app:theory}

In this section, we provide a theoretical motivation for the Manifold Stability Hypothesis proposed in \Cref{sec:Theory}. We connect the Principle of Least Action (PLA) to the static geometric properties of the image manifold, motivating the intuition that real and fake data may occupy distinct topological states (valleys \vs peaks). We emphasize that the following derivation is conditional on assumptions about image formation and does not constitute a universal proof.

\subsection{From Least Action to Minimum Potential Energy}

The \textbf{PLA} states that the physical evolution of a system between two states follows a path that makes the action functional stationary ($\delta \mathcal{S} = 0$). While PLA primarily governs dynamic trajectories, it has a direct corollary for the static states of macroscopic systems formed under natural laws: the \textbf{Principle of Minimum Potential Energy}.

Natural images can be viewed as terminal states of complex physical processes (\eg, photon transport, sensor integration, and chemical development). These processes are generally \textbf{dissipative}, \ie, they involve energy exchange with the environment and tend toward thermodynamic equilibrium.

\vspace{0.5em}
\noindent \textbf{Remark (Relaxation to Equilibrium via Langevin Dynamics).}
Consider the image formation process as a stochastic dynamical system governed by Langevin dynamics:
\begin{equation}
    \frac{d\mathbf{q}}{dt} = -\nabla V(\mathbf{q}) - \gamma \mathbf{p} + \sqrt{2\gamma \beta^{-1}} \boldsymbol{\xi}(t),
\end{equation}
where $\gamma$ represents the dissipation (friction) coefficient, and $\boldsymbol{\xi}(t)$ is Gaussian noise. Over time, dissipative terms drain the kinetic energy, and the system naturally settles into states where the net force vanishes ($\nabla V \approx 0$) and the potential energy is locally minimized.
\vspace{0.5em}

Therefore, we hypothesize that \textbf{real images ($\mathbf{x}_{real}$)} tend to represent these `relaxed' states. They are expected to reside near the \textbf{basins of attraction} (local minima) of the underlying natural potential surface $V$, characterized by structural coherence and minimal internal tension.

\subsection{Deepfakes as High-Tension States}

In contrast, deepfake generation (\eg, GANs, Diffusion Models) is not a natural dissipative relaxation but a \textbf{statistical optimization}. The generator $G$ aims to map a latent code $\mathbf{z}$ to a target distribution $p_{data}$ by minimizing a statistical divergence $\mathcal{D}$ (\eg, Jensen-Shannon or KL divergence).

However, minimizing statistical divergence $\mathcal{D}$ does not imply minimizing the physical potential $V$.
\begin{itemize}
    \item \textbf{Forced Fitting:} To satisfy the semantic constraints (\eg, `identity of Person A' + `expression of Person B'), the generator must fuse disjoint features that may not naturally coexist.
    \item \textbf{Structural Tension:} This fusion creates microscopic inconsistencies, such as unnatural pixel correlations or discontinuity in high-frequency domains. In our Hamiltonian framework, these inconsistencies manifest as \textbf{structural tension} (high potential energy).
\end{itemize}

\noindent \textbf{Physical Analogy.} Conceptually, a deepfake can be likened to a \textbf{compressed spring}: an external optimization pins the sample to a specific point, while the learned potential's gradient $-\nabla V$ acts as a restorative force pulling against that placement. The system therefore has high stored potential energy. Unlike real images, which rest in basins of the potential well, deepfakes under this view are suspended on \textbf{steep slopes} of $V$, where the gradient magnitude is significant: $\|\nabla V\| \gg 0$.

\subsection{Hamiltonian Dynamics as a Stability Probe}

Since the potential function $V$ is high-dimensional and implicit, we cannot directly measure the `elevation' of a sample. Instead, we use \textbf{Hamiltonian Dynamics} as a probe to test the local stability of the state.

We treat the feature vector $\mathbf{q}$ as a particle initialized at rest; the potential gradient then induces virtual momentum during the rollout. The subsequent evolution follows the Hamiltonian $\mathcal{H} = T + V$:
\begin{equation}
    \frac{d\mathbf{p}}{dt} = -\nabla V(\mathbf{q}).
\end{equation}
This leads to two distinct dynamic behaviors:
\begin{enumerate}
    \item \textbf{Oscillation (Real):} For a particle at a local minimum ($\nabla V \approx 0$), the force acts as a restorative force, keeping the particle confined within the basin. The kinetic energy $T$ remains bounded and small. Consequently, the accumulated \textbf{Hamiltonian Action (Cost)} remains close to the real-sample baseline (equivalently, $S-S_{real} \approx 0$ in the local comparison).
    \item \textbf{Divergence (Fake):} For a particle on a slope ($\|\nabla V\| \gg 0$), the non-zero gradient acts as a persistent accelerating force. The stored potential energy is rapidly converted into kinetic energy ($T \propto t^2$), causing the particle to accelerate away from the initial state. This results in a larger accumulated \textbf{Hamiltonian Action}.
\end{enumerate}

By simulating this short-term evolution, our HAAD framework effectively amplifies the microscopic static tension into a macroscopic dynamic signal, providing a complementary discriminator that is less dependent on specific visual artifacts than static detection methods.

\section{Proofs and Derivations for Proposition~\ref{prop:divergence}}
\label{app:proofs}

In this section, we provide the detailed derivation for Proposition~\ref{prop:divergence} (Divergence of Hamiltonian Action) presented in the main text. We analyze the short-term evolution of the latent state $\mathbf{q}$ under the proposed Hamiltonian system and bound the growth of the Hamiltonian Action $S$ for real and fake samples, respectively. This derivation is conditional on the Manifold Stability Hypothesis (Assumption~3.1); it formalizes the consequence of the hypothesis but does not prove that the hypothesis holds universally.

\subsection{Preliminaries}
Recall the canonical Hamiltonian of our system (Definition~\ref{def:hamilton_system} in the main text):
\begin{equation}
    \mathcal{H}(\mathbf{q}, \mathbf{p}) = \frac{1}{2} \|\mathbf{p}\|^2 + V(\mathbf{q}).
\end{equation}
The dynamics follow Hamilton's equations:
\begin{equation}
    \dot{\mathbf{q}} = \mathbf{p}, \quad \dot{\mathbf{p}} = -\nabla V(\mathbf{q}).
\end{equation}
In the implementation (\Cref{sec:symplectic}), the position update is additionally preconditioned by a state-conditioned diagonal matrix $\mathbf{M}(\mathbf{q})^{-1}$ that rescales per-feature step sizes; this preconditioner modulates the constant in the kinetic-growth bound below but does not affect the $\mathcal{O}(\tau^2)$ separation between real and fake samples. We therefore analyze the canonical system here for clarity. We consider a short time interval $t \in [0, \tau]$, initialized at rest: $\mathbf{q}(0) = \mathbf{q}_0$ and $\mathbf{p}(0) = \mathbf{0}$.

\subsection{Proof of Divergence}
To analyze the local behavior, we perform a second-order Taylor expansion of the potential energy $V(\mathbf{q})$ around the initial state $\mathbf{q}_0$. For any state $\mathbf{q}$ in the local neighborhood:
\begin{equation}
\begin{aligned}
    V(\mathbf{q}) \approx{}& V(\mathbf{q}_0)
    + \nabla V(\mathbf{q}_0)^\top (\mathbf{q} - \mathbf{q}_0) \\
    &+ \frac{1}{2}(\mathbf{q} - \mathbf{q}_0)^\top
    \mathbf{H}(\mathbf{q}_0)(\mathbf{q} - \mathbf{q}_0),
\end{aligned}
\end{equation}
where $\mathbf{H}(\mathbf{q}_0)$ is the Hessian matrix of $V$ at $\mathbf{q}_0$.

\subsubsection{Case 1: Real Samples (Stable Equilibrium)}
Under the \textbf{Manifold Stability Hypothesis} (Assumption~\ref{ass:stability}), a real sample $\mathbf{q}_{real}$ is hypothesized to reside near a local minimum. This neighborhood implies two conditions:
\begin{enumerate}
    \item \textbf{Vanishing Gradient:} $\|\nabla V(\mathbf{q}_{real})\| \approx 0$.
    \item \textbf{Positive Definite Hessian:} The Hessian $\mathbf{H}(\mathbf{q}_{real})$ is positive semi-definite (convex basin).
\end{enumerate}
Substituting $\nabla V \approx \mathbf{0}$ into Hamilton's equations:
\begin{equation}
    \dot{\mathbf{p}}(0) = -\nabla V(\mathbf{q}_{real}) \approx \mathbf{0}.
\end{equation}
Since the initial momentum $\mathbf{p}(0) = \mathbf{0}$ and the initial force is zero, the system remains in equilibrium at first-order approximation. The kinetic energy $T(t)$ remains negligible:
\begin{equation}
    T(t) = \frac{1}{2} \|\mathbf{p}(t)\|^2 \approx 0.
\end{equation}
Consequently, the accumulated Hamiltonian Action $S$ is dominated by the local baseline potential of the real sample:
\begin{equation}
    S_{real} \approx \frac{1}{\tau} \int_{0}^{\tau} V(\mathbf{q}_{real}) dt \approx V(\mathbf{q}_{real}).
\end{equation}

\subsubsection{Case 2: Fake Samples (Unstable State)}
For a deepfake sample $\mathbf{q}_{fake}$, the hypothesis posits that it is more likely to lie on a slope with a significant non-zero gradient in the local comparison of interest. Let $\mathbf{g} = \nabla V(\mathbf{q}_{fake})$ be the gradient vector, where $\|\mathbf{g}\| = C \gg 0$.

For a small time step $t$, we can assume the gradient force is approximately constant (zeroth-order approximation of force). The evolution of momentum is given by:
\begin{equation}
    \mathbf{p}(t) = \mathbf{p}(0) + \int_{0}^{t} -\nabla V(\mathbf{q}(s)) ds \approx -\mathbf{g} t.
\end{equation}

Under the canonical kinetic energy, $T(t)$ grows quadratically with time:
\begin{equation}
    T(t) = \frac{1}{2} \|\mathbf{p}(t)\|^2 = \frac{1}{2} \|\mathbf{g}\|^2 t^2 = \frac{1}{2} C^2 t^2.
\end{equation}
This confirms the quadratic surge in kinetic energy.

Now, consider the Hamiltonian Action $S$. Unlike the Lagrangian ($T-V$), we define $S$ as the accumulation of total energy magnitude $\mathcal{H} = T + V$ to maximize signal detection.
\begin{equation}
    S_{fake} = \frac{1}{\tau} \int_{0}^{\tau} (T(t) + V(\mathbf{q}(t))) dt.
\end{equation}
Since $\mathbf{q}_{real}$ is hypothesized to lie near a local minimum while $\mathbf{q}_{fake}$ lies on a higher-gradient slope, we consider the local comparison in which the potential gap $\Delta V = V(\mathbf{q}_{fake}) - V(\mathbf{q}_{real})$ is positive, and $T(t)$ adds a positive quadratic contribution. Specifically, integrating the kinetic term yields:

\begin{equation}
    \int_{0}^{\tau} T(t) dt \approx \int_{0}^{\tau} \frac{1}{2} C^2 t^2 dt = \frac{1}{6} C^2 \tau^3.
\end{equation}
Therefore, the Action gap scales as:
\begin{equation}
    S_{fake} - S_{real} \approx \Delta V + \frac{\tau^2}{6} C^2.
\end{equation}
Comparing Case 1 and Case 2, the positive potential gap together with the additional $\mathcal{O}(\tau^2)$ kinetic term implies:
\begin{equation}
    S_{fake} - S_{real} > 0.
\end{equation}
This completes the proof. \qed

\section{Volume Preservation of the Symplectic Euler Update (Constant-Mass Case)}
\label{app:symplectic_proof}

A desirable property for our HAAD framework is that the measured Energy Dissipation ($D$) primarily reflects the geometric properties of the data manifold rather than numerical artifacts of the integration scheme. In this section, we show that the \textbf{Symplectic Euler} integrator, \emph{in the ideal constant-mass case}, defines a \textit{symplectomorphism}, meaning it strictly preserves the phase-space volume (Liouville's Theorem).

\noindent\textbf{Scope of this result.} The analysis below treats $\mathbf{M}$ as a constant (state-independent) positive-definite diagonal matrix. In the main-text implementation (\Cref{sec:symplectic}) we instead use a \emph{state-conditioned} preconditioner $\mathbf{M}(\mathbf{q})^{-1}$ in the position update and explicitly omit the $\mathbf{q}$-derivative terms that would appear in an exact variable-mass Hamiltonian flow. Consequently, the constant-mass proof below should be read as motivation for our choice of integrator family (semi-implicit Euler with momentum-before-position ordering) rather than as a strict symplectomorphism guarantee for the implemented preconditioner-augmented update. We accordingly label the implemented rollout ``Hamiltonian-inspired, symplectic-style'' in \Cref{sec:symplectic} and reserve the strict ``symplectomorphism'' term for the constant-mass analysis here. We also note upfront that volume preservation does not imply exact Hamiltonian conservation, so $D$ is not fully isolated from numerical effects even in the constant-mass case.

\subsection{Jacobian Analysis of the Discrete Map}
Let $\Phi: (\mathbf{q}_t, \mathbf{p}_t) \mapsto (\mathbf{q}_{t+1}, \mathbf{p}_{t+1})$ denote the discrete update map defined by the Symplectic Euler scheme (\Cref{eq:update_p} and (\ref{eq:update_q}) in the main text). For a system with $d$ dimensions, the update rule is:
\begin{align}
    \mathbf{p}_{t+1} &= \mathbf{p}_t - \eta \nabla V(\mathbf{q}_t), \\
    \mathbf{q}_{t+1} &= \mathbf{q}_t + \eta \mathbf{M}^{-1} \mathbf{p}_{t+1}.
\end{align}
To prove volume preservation, we must show that the determinant of the Jacobian matrix of this transformation, $J = \frac{\partial(\mathbf{q}_{t+1}, \mathbf{p}_{t+1})}{\partial(\mathbf{q}_t, \mathbf{p}_t)}$, is identically equal to 1.

The transformation can be decomposed into two sequential sub-steps (shear transformations):
i) Momentum Update (Kick): $(\mathbf{q}_t, \mathbf{p}_t) \to (\mathbf{q}_t, \mathbf{p}_{t+1})$
ii) Position Update (Drift): $(\mathbf{q}_t, \mathbf{p}_{t+1}) \to (\mathbf{q}_{t+1}, \mathbf{p}_{t+1})$

\noindent \textbf{Step i): Jacobian of Momentum Update.}
Differentiating $\mathbf{p}_{t+1}$ with respect to the input state $(\mathbf{q}_t, \mathbf{p}_t)$:
\begin{equation}
    J_1 = \begin{pmatrix}
    \frac{\partial \mathbf{q}_t}{\partial \mathbf{q}_t} & \frac{\partial \mathbf{q}_t}{\partial \mathbf{p}_t} \\
    \frac{\partial \mathbf{p}_{t+1}}{\partial \mathbf{q}_t} & \frac{\partial \mathbf{p}_{t+1}}{\partial \mathbf{p}_t}
    \end{pmatrix} = \begin{pmatrix}
    \mathbf{I} & \mathbf{0} \\
    -\eta \mathbf{H}(\mathbf{q}_t) & \mathbf{I}
    \end{pmatrix},
\end{equation}
where $\mathbf{H}(\mathbf{q}_t)$ is the Hessian matrix of potential $V$. Since this is a lower triangular block matrix with identity blocks on the diagonal, its determinant is:
\begin{equation}
    \det(J_1) = \det(\mathbf{I}) \cdot \det(\mathbf{I}) = 1.
\end{equation}

\noindent \textbf{Step ii): Jacobian of Position Update.}
Differentiating $\mathbf{q}_{t+1}$ with respect to the intermediate state $(\mathbf{q}_t, \mathbf{p}_{t+1})$:
\begin{equation}
    J_2 = \begin{pmatrix}
    \frac{\partial \mathbf{q}_{t+1}}{\partial \mathbf{q}_t} & \frac{\partial \mathbf{q}_{t+1}}{\partial \mathbf{p}_{t+1}} \\
    \frac{\partial \mathbf{p}_{t+1}}{\partial \mathbf{q}_t} & \frac{\partial \mathbf{p}_{t+1}}{\partial \mathbf{p}_{t+1}}
    \end{pmatrix} = \begin{pmatrix}
    \mathbf{I} & \eta \mathbf{M}^{-1} \\
    \mathbf{0} & \mathbf{I}
    \end{pmatrix}.
\end{equation}
This is an upper triangular block matrix. Similarly, its determinant is:
\begin{equation}
    \det(J_2) = 1.
\end{equation}

\noindent \textbf{Total Jacobian.}
By the chain rule, the Jacobian of the full step $\Phi$ is the product $J = J_2 J_1$. The determinant is:
\begin{equation}
    \det(J) = \det(J_2) \det(J_1) = 1 \cdot 1 = 1.
\end{equation}

\subsection{Physical Implication: Trustworthiness of Dissipation}
Since $\det(J) = 1$, the Symplectic Euler integrator preserves the phase-space volume element $d\mathbf{q} \wedge d\mathbf{p}$ exactly.
\begin{remark}[\textbf{Reduced Numerical Damping}]
Unlike standard Euler or Runge-Kutta methods, where $|\det(J)| \neq 1$ leads to artificial energy drift (numerical damping or explosion), symplectic methods better control long-term energy drift and can often be interpreted as approximately conserving a nearby modified Hamiltonian under suitable step-size assumptions.
\end{remark}

This mathematical property means that the symplectic integrator reduces solver-induced energy drift relative to non-symplectic methods. However, volume preservation does not imply exact Hamiltonian conservation, so the observed Energy Dissipation ($D > 0$) is not guaranteed to be exclusively data-induced. The correct interpretation is that $D$ is an \textbf{empirically useful discriminative signal}: symplectic integration suppresses some numerical artifacts, making $D$ more attributable to the geometry of the potential surface, but it does not provide a strict isolation guarantee. The empirical effectiveness of $D$ as a detection cue is validated through the ablation study in the main text.

\section{Additional Experiments and Hyperparameter Sensitivity}
\label{app:experiments}

\subsection{Hyperparameter Sensitivity Analysis}

Besides the physical regularization weight $\lambda_{phy}$ in the main text, we further investigate the sensitivity of the proposed framework to two key hyperparameters: the Hamiltonian evolution step size $\eta$, and the energy margin $\gamma$. All experiments are conducted on the cross-dataset setting (Train on FF++, Evaluate on Celeb-DF-v2).

\noindent\textbf{i) Impact of Step Size ($\eta$).}
The step size $\eta$ determines the discrete time interval for the symplectic integrator. Its effect on detection performance is shown in \Cref{fig:sens_eta}.
\begin{itemize}
    \item \textbf{Too Small ($\eta < 0.1$):} The system evolves too slowly. Within the fixed steps $T=4$, the particle hardly moves from the initial state, resulting in a weak detection signal ($S \approx 0$ for both real and fake).
    \item \textbf{Optimal ($\eta \approx 0.4$):} Provides sufficient `kick' to reveal the instability of deepfakes while maintaining numerical stability for real samples.
    \item \textbf{Too Large ($\eta > 0.8$):} Leads to numerical instability and gradient explosion, causing the training to diverge.
\end{itemize}

\begin{figure}[h]
    \centering
    \includegraphics[width=0.48\textwidth]{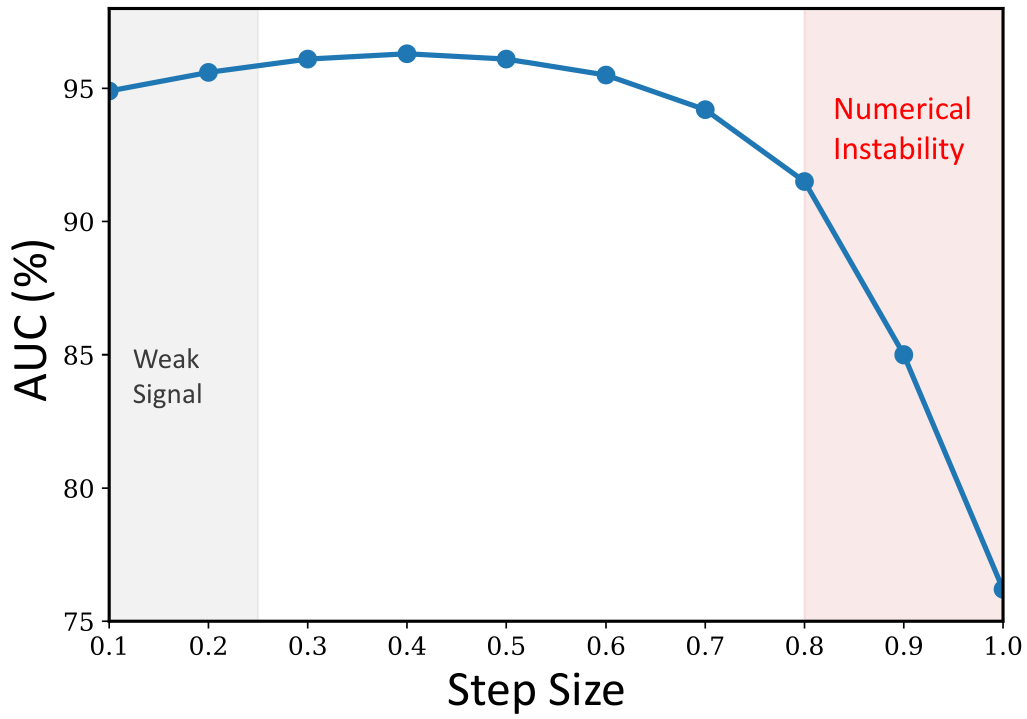}
    \caption{\textbf{Sensitivity to Step Size $\eta$.} Performance peaks around $\eta=0.4$; excessively large steps lead to numerical instability.}
    \label{fig:sens_eta}
\end{figure}

\subsection{Solver-Induced vs.\ Data-Induced Components of $D$}
\label{app:solver_decomp}

Because Symplectic Euler conserves a modified (shadow) Hamiltonian with $\mathcal{O}(\eta^2)$ error per step~\cite{leimkuhler2004simulating}, its baseline drift for near-conservative trajectories (real samples) grows predictably with step size $\eta$. In contrast, fake-induced drift emerges from large potential gradients and grows faster for larger $\|\nabla V\|$. A complete disentanglement would require measuring real-sample $D$ across multiple $\eta$ values to isolate the $\mathcal{O}(\eta^2)$ solver baseline, then comparing to fake-sample $D$ at the same $\eta$. The integrator ablation in \Cref{tab:abl_solver} provides indirect evidence for this separation: the Euler method, whose $\mathcal{O}(\eta)$ drift raises the solver noise floor, degrades detection performance by blurring the real/fake gap in $D$, whereas Symplectic Euler's more predictable drift preserves it.

\subsection{Computational Efficiency Analysis}

A key concern for physics-based deep learning is the inference latency. In Table~\ref{tab:inference_time}, we provide a detailed breakdown of the computational cost.
The experiments are conducted on a single NVIDIA H100 GPU with a batch size of 32.

\begin{table}[h]
\centering
\caption{Detailed comparison of computational cost (Inference per image).}
\label{tab:inference_time}
\begin{tabular}{l|c|c|c}
\toprule
\textbf{Method} & \textbf{Steps $T$} & \textbf{Time (ms)} & \textbf{Memory (MB)} \\
\midrule
CLIP ViT-L/14 (backbone) & - & 11.5 & 2450 \\
\midrule
HAAD (RK4) & 4 & 45.0 & 3100 \\
HAAD (Euler) & 4 & 12.0 & 2480 \\
\textbf{HAAD (Symplectic Euler)} & \textbf{4} & \textbf{14.0} & \textbf{2485} \\
\bottomrule
\end{tabular}
\end{table}

\textbf{Analysis:}
\begin{itemize}
    \item \textbf{Runge-Kutta (RK4):} While precise, it requires 4 gradient evaluations per step, resulting in a latency of \textbf{45ms}, which is $\approx 4.3\times$ slower than the backbone. This aligns with the findings in Table 5.
    \item \textbf{Symplectic Euler (Ours):} By requiring only 1 gradient evaluation per step, it achieves an inference time of \textbf{14ms}. The slight overhead compared to standard Euler (12ms) stems from the computation of the learnable preconditioner $\mathbf{M}(\mathbf{q})^{-1}$, which provides feature-specific sensitivity.
    \item \textbf{Memory Efficiency:} The Hamiltonian evolution operates in the low-dimensional latent space ($N \times 64$), inducing a negligible memory increase ($\approx 35$ MB) over the backbone.
\end{itemize}

\section{Detailed Network Architectures and Implementation}
\label{app:implementation}

In this section, we provide the specific architectural details of the proposed HAAD framework, including the backbone configuration, the structure of the physical projection heads, the graph topology construction, and the hyperparameter settings.

\subsection{HAAD Module Architecture}
The HAAD module projects high-dimensional semantic features into a low-dimensional physical state space. The specific components are defined as follows:

\subsubsection{Physical State Projection}
To reduce computational complexity and enforce a bottleneck, we project the input features $\mathbf{x}$ to a physical state $\mathbf{q} \in \mathbb{R}^{N \times D_{phy}}$ with $D_{phy} = 64$.
\begin{equation}
    \mathbf{q} = \text{Linear}(D_{in} \to D_{phy})(\mathbf{x}).
\end{equation}

\subsubsection{Mass Estimation Network}
The state-conditioned diagonal preconditioner $\mathbf{M}^{-1}(\mathbf{q})$ rescales the position update for each patch and feature dimension. It is estimated via a lightweight MLP with a Softplus activation to keep the scaling positive for numerical stability:
\begin{equation}
    \mathbf{M}^{-1}(\mathbf{q}) = \text{Softplus}\left( \text{MLP}(\mathbf{q}) \right) + \epsilon,
\end{equation}
where $\epsilon = 10^{-3}$. The MLP architecture is:
\begin{itemize}
    \item Layer 1: Linear($64 \to 64$) + ReLU
    \item Layer 2: Linear($64 \to 1$)
\end{itemize}
The output is broadcasted to match the dimension of $\mathbf{q}$ for element-wise multiplication.

\subsubsection{Potential Parameterization Heads}
To compute the photometric potential $V_{photo}$, we estimate intrinsic physical properties using shallow linear heads:
\begin{itemize}
    \item \textbf{Surface Normal ($\mathbf{n}$):} Linear($64 \to 3$) followed by $L_2$ normalization.
    \item \textbf{Albedo ($\rho$):} Linear($64 \to 1$) followed by Sigmoid activation.
    \item \textbf{Global Light ($\mathbf{l}$):} Linear($64 \to 3$), averaged over all patches to obtain a global vector.
\end{itemize}

\subsection{Graph Topology Construction}
To compute the geometric potential $V_{geo}$, we construct a spatial graph $\mathcal{G}$ over the patch grid. Unlike fully connected graphs, we utilize a local k-NN graph to capture local structural continuity.
\begin{itemize}
    \item \textbf{Coordinates:} We assign 2D grid coordinates $(y_i, x_i)$ to each patch.
    \item \textbf{Connectivity:} For each patch, we connect it to its $k=8$ nearest spatial neighbors.
    \item \textbf{Weights:} The edge weights $w_{ij}$ are computed using a Gaussian kernel based on Euclidean distance $d_{ij}$:
    \begin{equation}
        w_{ij} = \exp\left( -\frac{d_{ij}^2}{2\sigma^2} \right), \quad \text{with } \sigma = 8.0.
    \end{equation}
    \item \textbf{Laplacian:} We compute the sparse Graph Laplacian $\mathbf{L} = \mathbf{D} - \mathbf{W}$, where $\mathbf{D}$ is the degree matrix.
\end{itemize}

\subsection{Symplectic Integration Scheme}
We utilize the Symplectic Euler method for the discrete evolution of the system. Let $\eta$ be the step size and $T$ be the number of steps. The update rule at step $t$ is implemented as:
\begin{align}
    \mathbf{F}_t &= -\nabla_{\mathbf{q}} V(\mathbf{q}_t), \\
    \mathbf{p}_{t+1} &= \mathbf{p}_t + \eta \cdot \mathbf{F}_t, \\
    \mathbf{q}_{t+1} &= \mathbf{q}_t + \eta \cdot (\mathbf{M}^{-1} \odot \mathbf{p}_{t+1}).
\end{align}

This implementation is the same semi-implicit momentum-before-position update used in the main text, with $\mathbf{F}_t=-\nabla_{\mathbf{q}}V(\mathbf{q}_t)$ and the state-conditioned preconditioner applied only in the position update.

\end{document}